\documentclass[journal]{IEEEtran}
\IEEEoverridecommandlockouts
\usepackage{algorithm}
\usepackage{cite}
\usepackage{amsmath,amssymb,amsfonts,bm}
\usepackage{subcaption}
\usepackage{booktabs}
\usepackage{algorithmic}
\usepackage{graphicx}
\usepackage{textcomp}
\usepackage{xcolor}
\usepackage{multirow}

\long\def\comment#1{}

\newfont{\bbb}{msbm10 scaled 700}

\newfont{\bb}{msbm10 scaled 1100}

\renewcommand{\arg}{{\hbox{arg}}}

\usepackage{url}
\usepackage{bm}
\usepackage{bbm}
\usepackage{color}

\newtheorem{theorem}{Theorem}
\newtheorem{proposition}{Proposition}
\newtheorem{lemma}{Lemma}

\newtheorem{remark}{Remark}

\newtheorem{assumption}{Assumption}

\def\BibTeX{{\rm B\kern-.05em{\sc i\kern-.025em b}\kern-.08em
    T\kern-.1667em\lower.7ex\hbox{E}\kern-.125emX}}   
\begin{document}

\title{Heterogeneous Multi-Agent Reinforcement Learning for Radio Resource Management under Coupled Finite-Horizon Constraints}

\author{Yeonseo~Jeong, ~\IEEEmembership{Graduate Student Member,~IEEE,} Wonhyeok~Ko,~\IEEEmembership{Graduate Student Member,~IEEE,}  Sungweon~Hong, ~\IEEEmembership{Graduate Student Member, ~IEEE, } and~Songnam~Hong,~\IEEEmembership{Senior Member,~IEEE}
        \thanks{Y. Jeong and W. Ko contributed equally to this work.}
        \thanks{Y.~Jeong, S.-W.~Hong, and S.~Hong are with the Department of Electronic Engineering, Hanyang University, Seoul, Korea. W.~Ko is with the Department of AI-Semiconductor, Hanyang University, Seoul, Korea (e-mail: \{wjddustj225, blackhawk77, hongsw94, snhong\}@hanyang.ac.kr).}
        \thanks{This work was supported by the NRF (No. RS-2024-00409492) and the IITP
        (IITP-2026-RS-2023-00253914), funded by the Korea government (MSIT).}
         }

\maketitle

\begin{abstract}
Maximizing throughput under proportional fairness in dense wireless networks requires jointly managing user association, scheduling, base station~(BS) activation, and handover control under hard finite-horizon energy and handover budgets, which induces a fundamental tension between BS-side energy management and user-side handover regulation. While multi-agent reinforcement learning~(MARL) is a natural framework for such distributed sequential control, its application here faces two difficulties: finite-horizon budget constraints cannot be evaluated at each time slot, and the nonlinear proportional fairness utility admits no principled per-slot decomposition. We propose HeLyMARL, a Lyapunov-embedded heterogeneous MARL framework that resolves both via drift-plus-penalty decomposition with virtual queues. The energy and handover constraint pressures are internalized directly into a unified per-slot reward, converting the constrained finite-horizon problem into an unconstrained MARL problem. Comparison against two Lagrangian-based alternatives reveals a timescale separation: Lagrangian relaxation regulates constraints only across training episodes, whereas the virtual queues of HeLyMARL bound cumulative budget consumption at every partial horizon within an episode, a pacing guarantee beyond the reach of greedy Lyapunov-based control. Simulations show that HeLyMARL is the only method that sustains the throughput-fairness balance together with uninterrupted service throughout the horizon, outperforming conventional MARL, Lyapunov-based, and constrained MARL benchmarks without premature budget exhaustion.
\end{abstract}

\begin{IEEEkeywords}
Lyapunov optimization, multi-agent reinforcement learning, constrained reinforcement learning, radio resource management.
\end{IEEEkeywords}

\section{Introduction}
\label{sec:introduction}

Dense cellular deployments must operate under increasingly stringent energy and mobility constraints~\cite{buzzi2016survey}. In such networks, user association, base station~(BS) energy control, and mobility-induced handover dynamics are inherently coupled, giving rise to fundamental tradeoffs~\cite{chen2011fundamental, ye2013user}. Keeping more BSs active enhances spatial diversity and throughput but drains the energy budget more quickly, whereas conservative activation preserves energy at the expense of coverage and user performance. Handovers add a third dimension to this tension: while switching to a better-positioned BS improves instantaneous rates, each handover incurs signaling overhead and service interruption that degrade Quality of Experience~(QoE). The number of handovers must therefore be treated as a budgeted resource. Balancing these competing objectives under finite-horizon energy and handover budgets is the central challenge addressed in this work.

Jointly optimizing user association, BS activation, and handover control is a sequential decision-making problem under time-varying channels, inter-cell interference, and coupled finite-horizon constraints. Because the three control variables interact tightly across both time and space, the problem is generally intractable to solve optimally even for moderately sized networks. Multi-agent reinforcement learning~(MARL) has emerged as a promising framework for such distributed control tasks: under the centralized training with decentralized execution~(CTDE) paradigm, it captures complex network dynamics and nonlinear resource couplings through trajectory-level learning, thereby avoiding the myopia of per-slot optimization. MARL has been applied to dynamic power control in multi-cell networks~\cite{nasir2019multi}, joint spectrum and power optimization~\cite{zhang2025multi}, and distributed resource management in UAV-assisted vehicular networks~\cite{peng2021multi}. It has likewise shown strong performance in user scheduling~\cite{yang2022marl}, distributed channel access~\cite{guo2022channel, hong2025qippo}, adaptive user association in mmWave networks~\cite{sa20}, and resource management under interference constraints~\cite{naderializadeh2021resource}, as well as in handover management jointly with power allocation~\cite{guo2020joint}.

Despite this progress, applying MARL to fairness-aware network control under finite-horizon constraints raises two fundamental difficulties that existing approaches have not fully resolved. The first is \emph{reward design}. The proportional fairness utility standard in fairness-aware resource allocation is a nonlinear function of \emph{time-averaged} rates~\cite{kelly1997, kushner2004}; it cannot be evaluated at any single slot and therefore admits no per-slot reward without a principled decomposition. Prior work sidesteps this difficulty either by optimizing instantaneous rate-based objectives~\cite{nasir2019multi, naderializadeh2021resource} or by adopting heuristic surrogates such as the weighted proportional-fairness ratio~\cite{eldeeb2025offline}, neither of which retains a principled connection to the time-averaged utility. Constrained MARL algorithms such as Constrained Policy Optimization~(CPO)~\cite{achiam2017cpo} and Multi-Agent Proximal Policy Optimization~(MAPPO) with Lagrangian relaxation~\cite{gu2022macpo} do not resolve the difficulty either, as they presuppose an existing per-slot reward signal. The second is \emph{two-sided constraint coupling}. Most existing studies train either BS or user agents in isolation. Even the few frameworks that learn both groups jointly consider at most a single constrained resource on one side, and none addresses the setting in which \emph{each} agent group carries its own hard finite-horizon budget: energy at the BSs and handovers at the users, interacting through the shared association decisions. This interaction is not a mere formality. A tight handover budget locks users to their serving BSs, concentrating traffic on a subset of BSs whose energy budgets then deplete disproportionately; neither constraint can therefore be managed without regard to the other. Enforcing such coupled two-sided hard constraints across heterogeneous agent groups is, to the best of our knowledge, unaddressed in the existing MARL literature.

\begin{figure}[t!]
\vspace{0.02in}
\centering
\includegraphics[width=1.0\linewidth]{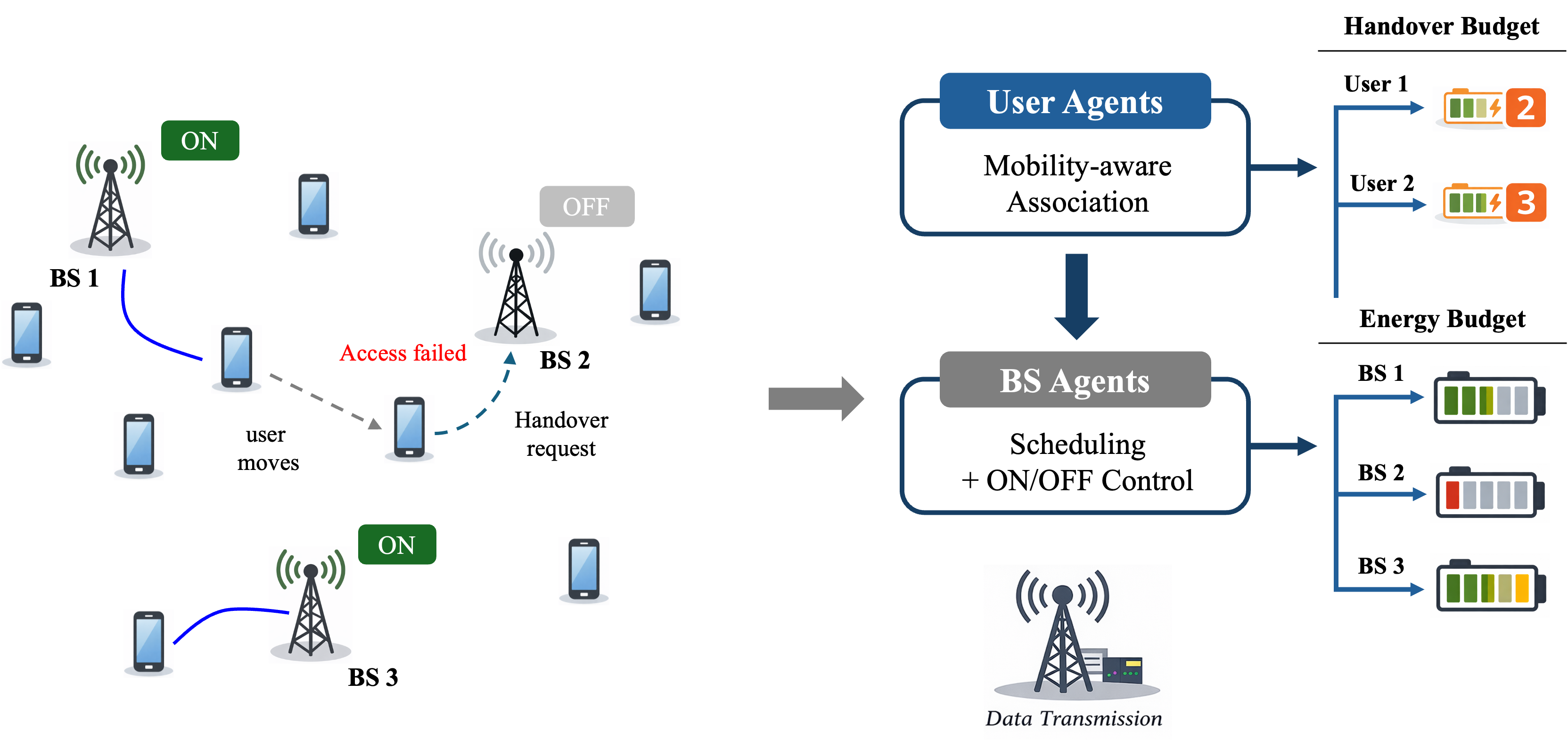}
\caption{The considered network scenario and the HeLyMARL framework. User agents determine associations and BS agents determine scheduling and activation, under finite-horizon energy and handover constraints.}
\label{fig:1}
\end{figure}

Motivated by these limitations, we propose \emph{HeLyMARL}, a Lyapunov-embedded heterogeneous MARL framework illustrated in Fig.~\ref{fig:1}. HeLyMARL maintains a virtual queue for each constraint, a fairness queue $Q_u(t)$ per user together with an energy queue $Z_b(t)$ per BS and a handover queue $G_u(t)$ per user, and embeds them into both the state and the reward of the agents. The queues resolve the two difficulties at once. They make the finite-horizon budgets observable at every slot, since a queue grows precisely as its budget is consumed, and the drift-plus-penalty~(DPP) analysis~\cite{neely2010stochastic} yields from them a per-slot reward that retains a principled connection to the time-averaged proportional fairness utility. Because the queues enter the state as well, the policy conditions its actions on the current constraint pressure rather than merely being rewarded for the outcome.

This is also what overcomes the finite-horizon limitation of DPP control itself. The guarantees of DPP are \emph{asymptotic}: transient over-consumption is absorbed as the time average converges, so greedy per-slot optimization is optimal in the long run. Over a single finite episode the same mechanism becomes a failure mode, as the queues start from zero and the per-slot objective spends the budget well before the horizon ends. By optimizing the DPP objective over trajectories rather than slot by slot, HeLyMARL learns to restrain consumption while the queues are still small and distributes the budget across the horizon.

We established this principle for a single BS-side energy budget in our prior work~\cite{ko2026lymarl}. The present work extends it along three axes. The problem is extended to \emph{coupled two-sided} budgets, which is what precludes the role-specific reward decomposition of~\cite{ko2026lymarl}: that decomposition is well-defined when each virtual queue is owned by exactly one agent group, and no such ownership exists for the handover constraint. HeLyMARL therefore retains the per-slot DPP objective in the unified form in which the drift analysis produces it, which in turn allows the two critics of~\cite{ko2026lymarl} to be consolidated into one. Relinquishing the decomposition, however, reintroduces the credit assignment difficulty that motivated it, so HeLyMARL replaces the simultaneous MAPPO backbone with the sequential group update of HAPPO~\cite{kuba2021trust}, which separates the two groups' contributions without separating their rewards. The unified reward and the sequential update are thus two halves of a single design decision. Finally, the present work provides the theoretical characterization of constraint satisfaction and convergence that~\cite{ko2026lymarl} lacks.

The main contributions of this paper are as follows:
\begin{itemize}
\item We formulate a finite-horizon network control problem under a \emph{two-sided hard constraint} structure governing per-BS energy and per-user handover budgets. DPP decomposition with virtual queues converts it into an unconstrained MARL problem with a per-slot reward grounded in Lyapunov stability theory, within a heterogeneous CTDE framework that generalizes to additional per-agent-group constraints.

\item We propose HeLyMARL, which internalizes the constraint penalties into a unified DPP reward under HAPPO. Comparison with two Lagrangian-based alternatives, Jensen-HAPPO and PF-HAPPO, establishes reward internalization as the principled choice for coupled two-sided constraints.

\item We characterize constraint satisfaction across the design space: deterministic per-trajectory feasibility under action masking (Proposition~\ref{prop:feasibility}), an $\mathcal{O}(1/\sqrt{K})$ bound on the episode-averaged violation of the Lagrangian variants over $K$ training episodes, confined to the inter-episode timescale (Theorem~\ref{thm:dual_conv}), and an intra-episode pacing property for HeLyMARL that has no counterpart under Lagrangian relaxation and is unattainable by greedy per-slot control (Proposition~\ref{prop:intra_episode}).

\item Simulations show that HeLyMARL is the only method that sustains the throughput-fairness balance together with uninterrupted service throughout the horizon, outperforming a conventional heuristic, a Lyapunov-based benchmark, and constrained MARL benchmarks.
\end{itemize}

The remainder of this paper is organized as follows. Section~\ref{sec:model} presents the system model and problem formulation. Section~\ref{sec:reward} develops three per-slot reward formulations, and Section~\ref{sec:marl} describes the HeLyMARL framework and its two Lagrangian-based alternatives. Section~\ref{sec:theory} provides the theoretical analysis, and Sections~\ref{sec:experiments} and~\ref{sec:conclusion} present the simulation results and conclude the paper.

\section{System Model and Problem Formulation}
\label{sec:model}

We describe the network model and formulate the joint optimization of user association, scheduling, and BS activation under finite-horizon energy and handover constraints.

\subsection{Network Model}
\label{sec:network_model}

We consider a cellular network with $B$ BSs, indexed by $\mathcal{B}=\{1,\dots,B\}$, and $U$ mobile users, indexed by $\mathcal{U}=\{1,\dots,U\}$, operating over discrete time slots $t = 0, 1, \ldots, T-1$. Each BS is subject to a finite energy budget, while user mobility triggers handover events whose number must be kept limited over the horizon, since each handover incurs signaling overhead and service interruption that degrade QoE. At every slot, user--BS associations and BS activation states must be determined jointly under time-varying channels and inter-cell interference.

\noindent\textbf{Association and scheduling.}
Let $x_{u,b}(t)\in\{0,1\}$ indicate whether user $u$ is served by BS $b$ at slot $t$, and let $y_b(t)\in\{0,1\}$ indicate whether BS $b$ is active. At each slot, each user associates with at most one BS and each active BS serves at most one user:
\begin{align}
  \sum_{b=1}^{B} x_{u,b}(t) &\leq 1, \quad \forall u,
  \label{eq:assoc} \\
  \sum_{u=1}^{U} x_{u,b}(t) &\leq y_b(t), \quad \forall b,t.
  \label{eq:sched}
\end{align} The service rate of user $u$ at slot $t$ is 
\begin{equation}
  R_u(t) = \sum_{b=1}^{B} x_{u,b}(t)\, r_{u,b}(t),
  \label{eq:rate}
\end{equation}
where $r_{u,b}(t)$ denotes the achievable rate, determined by the channel realization and the inter-cell interference induced by network-wide decisions. Control decisions are made on the basis of estimated rates $\{\hat{r}_{u,b}(t)\}$ obtained from available channel state information~(CSI). 

The single-user restriction in~\eqref{eq:sched} lets us isolate the coupling between the energy and handover budgets from the separate question of how a BS divides its resources among simultaneously served users. It does not affect the structure of the problem: the two budgets are consumed by BS activation and by user switching, neither of which depends on how many users a BS serves once active. Extending to multi-user transmission expands only the BS action, and physical layers with residual inter-user coupling such as multi-user multiple-input multiple-output~(MU-MIMO) are left for future work.

\noindent\textbf{Handover model.}
As the network evolves, a user's best serving choice may shift over time, whether due to mobility-induced channel variation or to load conditions at the BSs. Switching between BSs across successive slots incurs the signaling and interruption costs noted above. Let $m_u(t)\in\{0,1,\ldots,B\}$ denote the most recently served BS of user $u$, with $m_u(t)=0$ indicating no prior service. A handover occurs when a served user switches its serving BS:
\begin{equation}
  h_u(t) = \mathbf{1}\!\left\{
    \sum_{b=1}^{B} x_{u,b}(t) = 1,\;
    \substack{
        m_u(t-1)\neq 0,\\
        m_u(t-1)\neq b_u(t)
    }
  \right\},
  \label{eq:ho}
\end{equation}
where $b_u(t) = \arg\max_b x_{u,b}(t)$ denotes the serving BS at slot $t$. For later use, we also define the \emph{predicted} handover indicator $\hat{h}_{u,b}(t)=\mathbf{1}\{b\neq m_u(t-1),\, m_u(t-1)\neq 0\}$, which flags whether serving user $u$ at BS $b$ \emph{would} trigger a handover. The memory state evolves as
\begin{equation}
  m_u(t) =
  \begin{cases}
    b_u(t),     & \text{if } \sum_b x_{u,b}(t) = 1, \\
    m_u(t-1),   & \text{otherwise,}
  \end{cases}
  \label{eq:memory}
\end{equation}
which retains the last serving BS when the user is not served. Note that~\eqref{eq:ho} is a conjunction of a user-side event (requesting a BS different from $m_u(t{-}1)$) and a BS-side event (that BS scheduling the user); a handover materializes only when both occur. This joint ownership is what precludes a role-specific treatment of the handover constraint, as discussed in Section~\ref{sec:introduction}.

\noindent\textbf{Energy model.}
Dynamic BS activation is a key lever for improving energy efficiency in dense networks~\cite{Oh11, Wu13}. The per-slot energy consumption of BS $b$ is $e_b(t) = \bar{e}_b y_b(t)$, where $\bar{e}_b > 0$ is a fixed per-slot activation cost. The finite energy and handover budgets over the horizon are parameterized as
\begin{equation}
  E_b^{\max} = \eta\bar{e}_b T, \quad
  H_u^{\max} = \left\lfloor \kappa (T-1)\right\rfloor,
  \label{eq:budgets}
\end{equation}
where $\eta \in (0,1]$ denotes the target BS activation ratio and $\kappa \in [0,1]$ controls the maximum allowable handover ratio per user.

\subsection{Problem Formulation}
Let $\mathcal{F}$ denote the feasibility set defined by~\eqref{eq:assoc}--\eqref{eq:sched}. Let $\pi=(\pi^{\rm user},\pi^{\rm bs})$ denote a pair of decentralized policies under which the decisions are made locally: each user determines its association request based on its own observation, and each BS determines its scheduling and activation based on its own observation and the requests it receives. The joint policy $\pi$ thereby induces the network-wide decisions $\{x_{u,b}(t),y_{b}(t)\}$ at each slot; its parameterization and training are described in Section~\ref{sec:marl}. We aim to find $\pi$ that maximizes the proportional-fairness~(PF) utility over the time-averaged rate, subject to hard finite-horizon energy and handover constraints:
\begin{align}
  \max_{\pi} \quad
  & \mathbb{E}_{\pi}\!\left[\sum_{u=1}^{U}
    \log\!\left(\frac{1}{T}\sum_{t=0}^{T-1} R_u(t)\right)\right]
  \label{eq:obj} \\
  \text{s.t.} \quad
  & \sum_{t=0}^{T-1} \bar{e}_b y_b(t)\leq E_b^{\max},\quad \forall b,
  \label{eq:energy_c} \\
  & \sum_{t=1}^{T-1} h_u(t)\leq H_u^{\max},\quad \forall u,
  \label{eq:ho_c} \\
  & \{x_{u,b}(t), y_b(t)\} \in \mathcal{F},\quad \forall t,
  \label{eq:feas_c}
\end{align}
where $\mathbb{E}_{\pi}[\cdot]$ denotes the expectation over trajectory randomness induced by $\pi$, including channel fading and user mobility. The constraints~\eqref{eq:energy_c}--\eqref{eq:ho_c} are imposed as \emph{hard} per-trajectory constraints that must hold for every environment realization, independently of the policy randomness in $\mathbb{E}_{\pi}$; the handover count begins at $t=1$ since no serving BS is defined prior to the first slot, i.e., $m_u(0)=0$, $\forall u\in\mathcal{U}$.

Three features make this problem resistant to per-slot optimization. The PF utility is a nonlinear function of the \emph{time-averaged} rates and thus assigns no well-defined value to any single slot. The budget constraints~\eqref{eq:energy_c}--\eqref{eq:ho_c} couple all $T$ decisions of each BS and each user, so that spending early forecloses options later. The memory state~\eqref{eq:memory} makes the handover cost of an association depend on the entire service history. Together these preclude per-slot greedy control and motivate the trajectory-aware, constraint-aware learning framework developed in the sequel.

\section{From Constrained Time-Average Utility to Per-Slot Rewards}
\label{sec:reward}

Solving~\eqref{eq:obj} via MARL requires a per-slot reward signal, since agents optimize cumulative returns over sequential decisions. The two difficulties identified in Section~\ref{sec:introduction} stand directly in the way. The PF utility is defined over \emph{time-averaged} rates and assigns no well-defined value to any single slot, so no per-slot reward exists without a principled decomposition. The finite-horizon budgets~\eqref{eq:energy_c}--\eqref{eq:ho_c} likewise cannot be evaluated, let alone enforced, at any single slot. Existing constrained MARL algorithms such as CPO~\cite{achiam2017cpo} and MAPPO with Lagrangian relaxation~\cite{gu2022macpo} address only the second difficulty, and even then presuppose an existing per-slot reward; they cannot be applied to~\eqref{eq:obj} until the first is resolved.

We develop and compare three per-slot reward formulations that span the main design choices. Jensen's inequality approximation and PF decomposition resolve only the utility decomposition, leaving the budgets to explicit Lagrangian handling; these are instantiated as Jensen-HAPPO and PF-HAPPO in Section~\ref{sec:marl}. The DPP framework resolves both at once, internalizing the budgets into the reward itself, and underlies HeLyMARL. Deriving all three within a common architecture lets the subsequent comparison isolate the effect of the reward formulation and the constraint-handling mechanism.

\subsection{Jensen's Inequality Approximation}
\label{subsec:jensen}

Since $\mathbb{E}_{\pi}[\cdot]$ in~\eqref{eq:obj} is estimated via sample trajectories in the MARL setting, the objective is approximated along a single trajectory as
\begin{equation}
  \sum_{u=1}^{U}\log\!\left(
    \frac{1}{T}\sum_{t=0}^{T-1} R_u(t)
  \right).
  \label{eq:obj_sample}
\end{equation} Applying Jensen's inequality to the concave log function in~\eqref{eq:obj_sample} yields 
\begin{equation}
  \sum_{u=1}^{U}\log\!\left(\frac{1}{T}\sum_{t=0}^{T-1} R_u(t)\right)
  \geq\frac{1}{T}\sum_{t=0}^{T-1}\sum_{u=1}^{U} \log R_u(t),
  \label{eq:jensen}
\end{equation}
so that maximizing the right-hand side serves as a surrogate for~\eqref{eq:obj_sample}, with the per-slot reward 
\begin{equation}
    r_t=\sum_{u}\log R_u(t), \label{eq:jensen_reward}
\end{equation} where the sum is taken over served users so that the logarithm is always well-defined. While simple to implement, this lower bound is loose: it rewards instantaneous rates without regard to each user's cumulative allocation. It moreover provides no connection to the hard constraints~\eqref{eq:energy_c}--\eqref{eq:ho_c}. The formulation therefore remains a \emph{constrained MARL} problem, in which the budgets must be enforced by a separate mechanism:
\begin{align}
  \max_{\pi} \quad
  & \mathbb{E}_{\pi}\!\left[
      \sum_{t=0}^{T-1} \sum_{u=1}^{U}
      \log R_u(t)
    \right]
  \label{eq:marl_jensen} \\
  \text{s.t.} \quad
  & \sum_{t=0}^{T-1} \bar{e}_b y_b(t)
    \leq E_b^{\max}, \quad \forall b,\nonumber\\
  & \sum_{t=1}^{T-1} h_u(t)
    \leq H_u^{\max}, \quad \forall u.\nonumber
\end{align}

\subsection{PF Decomposition}
\label{subsec:pf}
Let $\bar{R}_u(t-1)=\frac{1}{t}\sum_{\tau=0}^{t-1}R_u(\tau)$ denote the cumulative average rate of user $u$ up to slot $t$, initialized as $\bar{R}_u(0)=\epsilon>0$. It is well known that the PF utility $\sum_u\log\bar{R}_u$ can be optimized on a per-slot basis through its gradient with respect to $\bar{R}_u$~\cite{kelly1997, kushner2004}, which yields the per-slot reward
\begin{equation}
  r_t = \sum_{u=1}^{U}\frac{R_u(t)}{\bar{R}_u(t-1)}.
  \label{eq:pf_reward}
\end{equation}
Maximizing~\eqref{eq:pf_reward} slot by slot provably converges to the PF solution as $T\to\infty$~\cite{kushner2004}. Like the DPP guarantee discussed in Section~\ref{sec:introduction}, however, this is an asymptotic result: it offers no control over how resources are consumed within a finite horizon. Moreover, since the hard constraints~\eqref{eq:energy_c}--\eqref{eq:ho_c} are not captured by~\eqref{eq:pf_reward}, this approach leads to the same \emph{constrained MARL} formulation as~\eqref{eq:marl_jensen}, with the per-slot reward replaced by~\eqref{eq:pf_reward}.

\subsection{DPP Framework}
\label{subsec:dpp}
The DPP framework~\cite{neely2010stochastic} provides a principled transformation by introducing auxiliary variables $\gamma_u(t)$ and virtual queues to convert both the time-average log-utility and the finite-horizon constraints into tractable per-slot quantities. Specifically, three virtual queues are defined:
\begin{align}
  Q_u(t+1) &= \bigl[Q_u(t) + \gamma_u(t) - R_u(t)\bigr]^+, \quad \forall u,
  \label{eq:Q} \\
  Z_b(t+1) &= \bigl[Z_b(t) + \bar{e}_b y_b(t) - \bar{E}_b\bigr]^+, \quad \forall b,
  \label{eq:Z} \\
  G_u(t+1) &= \bigl[G_u(t) + h_u(t) - \bar{H}_u\bigr]^+, \quad \forall u,
  \label{eq:G}
\end{align}
where $\bar{E}_b = E_b^{\max}/T$ and $\bar{H}_u = H_u^{\max}/T$ are the per-slot budget allocations. The energy and handover queues are initialized to zero, $Z_b(0)=G_u(0)=0$, while the fairness queue is initialized to $Q_u(0)=\epsilon>0$. The fairness queue $Q_u(t)$ tracks the mismatch between the target rate $\gamma_u(t)$ and the achieved rate $R_u(t)$, while $Z_b(t)$ and $G_u(t)$ track cumulative energy and handover deviations from their respective per-slot budgets.

Applying the standard drift-plus-penalty analysis to the quadratic Lyapunov function $L(t)=\frac{1}{2}\bigl[\sum_u Q_u(t)^2+\sum_b Z_b(t)^2+\sum_u G_u(t)^2\bigr]$ and upper-bounding the one-slot drift yields a per-slot objective whose action-dependent part is obtained as
\begin{equation}
  r_t = \sum_{u=1}^{U}\bigl[Q_u(t)R_u(t) - G_u(t)h_u(t)\bigr]
      - \sum_{b=1}^{B} Z_b(t)\,\bar{e}_b\,y_b(t).
  \label{eq:dpp_reward}
\end{equation} The remaining terms of the drift bound, $V\sum_u\log\gamma_u(t)-\sum_u Q_u(t)\gamma_u(t)$, involve only the auxiliary variables and are independent of the agents' actions; they are optimized separately in closed form and can be omitted from the reward without affecting the induced policy. Following~\cite{neely2010stochastic}, maximizing the drift bound over
$\gamma_u(t)\in[0,\gamma_u^{\max}(t)]$ yields 
\begin{equation}
  \gamma_u(t)=\min\bigl\{\gamma_u^{\max}(t),\,V/Q_u(t)\bigr\},
  \label{eq:gamma}
\end{equation}
where $\gamma_u^{\max}(t)$ is the interference-free rate upper bound estimated from pilots, consistent with~\cite{ko2026lymarl}. The cap bounds the per-slot increment of $Q_u(t)$, and the initialization $Q_u(0)=\epsilon>0$ keeps $\gamma_u(0)$ finite.

The virtual-queue weights $Q_u(t)$, $Z_b(t)$, and $G_u(t)$ in~\eqref{eq:dpp_reward} thus provide time-varying signals that reflect cumulative fairness, energy, and handover pressures, respectively. The parameter $V>0$ controls the tradeoff between utility maximization and fairness-queue regulation, with larger $V$ amplifying the fairness pressure in the reward. 
Unlike the Jensen and PF approaches, DPP embeds all three pressures
directly into the reward, so the problem becomes the \emph{unconstrained}
MARL problem of maximizing $\mathbb{E}_{\pi}[\sum_{t=0}^{T-1} r_t]$, in
which the budgets are regulated through the evolving queue weights rather
than enforced by a separate mechanism. The resulting intra-episode pacing behavior is characterized in Section~\ref{sec:theory}, and hard per-trajectory satisfaction is guaranteed at inference by budget-aware action masking (Remark~\ref{rem:masking}).

\section{The HeLyMARL Framework} 
\label{sec:marl}

Based on the per-slot reward formulations of Section~\ref{sec:reward}, we develop a heterogeneous MARL framework under the CTDE paradigm. The overall system is modeled as a finite-horizon decentralized partially observable Markov decision process~(Dec-POMDP) over $T$ time slots, with two heterogeneous agent groups optimized via HAPPO~\cite{kuba2021trust}. The agent design, candidate-set construction, and budget-aware action masking are common to all reward choices; what differs is the constraint-relevant information carried in the states and observations, and the constraint-handling mechanism. This gives rise to three variants. \emph{Jensen-HAPPO} adopts the Jensen reward~\eqref{eq:jensen_reward} and solves the resulting constrained MARL problem via Lagrangian relaxation, and \emph{PF-HAPPO} adopts the PF reward~\eqref{eq:pf_reward} under the same Lagrangian treatment. \emph{HeLyMARL} adopts the DPP reward~\eqref{eq:dpp_reward}, whose evolving queue weights regulate the budgets within the reward itself and thereby remove the need for a separate constraint-handling mechanism.

\subsection{Agent Design and HAPPO Backbone}
\label{subsec:agent_design}

The two heterogeneous agent groups are defined as follows:
\begin{itemize}
  \item \textbf{User agents:} select a BS association request $a_u^{\rm user}(t)\in\{1,\ldots,B\}$, where requesting a BS different from $m_u(t{-}1)$ constitutes a handover attempt;
  \item \textbf{BS agents:} determine scheduling and activation decisions $a_{b}^{\rm bs}(t)\in\{0,1,\ldots,N_c\}$, where $a_{b}^{\rm bs}(t)=0$ indicates that BS $b$ remains inactive and $a_{b}^{\rm bs}(t)=k$ schedules the $k$-th user in the candidate set $\mathcal{C}_{b}(t)$ defined below.
\end{itemize}
Let $\mathcal{U}_{b}^{\rm req}(t)$ denote the set of users requesting BS $b$ at slot $t$. To keep the BS-side action and observation spaces of fixed dimension, independent of both the user population $U$ and the realized number of requests, each BS retains a top-$N_c$ candidate set $\mathcal{C}_{b}(t)\subseteq\mathcal{U}_{b}^{\rm req}(t)$. When fewer than $N_c$ users request BS $b$, the unfilled slots are zero-padded and masked out of the policy's output distribution. The ranking score follows the reward formulation of each variant. For HeLyMARL, candidates are ranked by the DPP-induced score
\begin{equation}
  s_{u,b}(t)\;\triangleq\;Q_{u}(t)\,\hat{r}_{u,b}(t)
  -G_{u}(t)\,\hat{h}_{u,b}(t),
  \label{eq:score}
\end{equation} the user-dependent part of the DPP reward~\eqref{eq:dpp_reward} (the energy term $Z_b(t)\bar{e}_b$ is common to all candidates at a given BS and does not affect the ranking). For the Jensen and PF variants, which maintain no virtual queues, candidates are ranked by the estimated rate $\hat{r}_{u,b}(t)$, consistent with their rate-based rewards. The joint actions uniquely determine $\{x_{u,b}(t),y_{b}(t)\}\in\mathcal{F}$, automatically satisfying~\eqref{eq:feas_c}.

All user agents share a common policy $\pi_{\theta^{\rm user}}$ and all BS agents share $\pi_{\theta^{\rm bs}}$. The two group policies are updated sequentially via the clipped HAPPO objective:
\begin{equation}
  L_{\rm HAPPO}(\theta^g)
  = \mathbb{E}\!\left[
      \min\!\left(
        \rho_t^g A_t,\;
        {\rm clip}(\rho_t^g,
        1{-}\varepsilon,
        1{+}\varepsilon) A_t
      \right)
    \right],
  \label{eq:happo_loss}
\end{equation} where $\rho_t^g = \pi_{\theta^g}(a_t \mid o_t)/ \pi_{\theta^g_{\rm old}}(a_t \mid o_t)$ is the importance sampling ratio for agent group $g \in \{{\rm user, bs}\}$, and $A_t$ is the group-specific advantage defined below. The advantage $A_t$ is computed from the centralized critic $V_\phi(s_t)$ via generalized advantage estimation~(GAE):
\begin{align}
  \delta_t &= r_t + \gamma(1-d_t) V_\phi(s_{t+1}) - V_\phi(s_t),
  \label{eq:td_error} \\
  A_t &= \delta_t + \gamma\lambda(1-d_t) A_{t+1},
  \label{eq:gae}
\end{align}
where $d_t = \mathbf{1}\{t = T-1\}$ is the episode termination indicator, $\gamma \in (0,1)$ is the discount factor, and $\lambda \in (0,1)$ controls the bias--variance tradeoff. The user policy $\pi_{\theta^{\rm user}}$ is updated first using $A_t^{\rm user}=A_t$. After the user-policy update is complete, its effect on the joint action distribution is incorporated into the subsequent BS-policy update through the correction factor:
\begin{equation}
C_t^{\rm user} = 
    \prod_{u=1}^{U}
    \frac{\pi_{\theta_{\rm new}^{\rm user}}
    \left(a_u(t)\mid o_u(t)\right)}
    {\pi_{\theta_{\rm old}^{\rm user}}
    \left(a_u(t)\mid o_u(t)\right)},
\label{eq:user_correction}
\end{equation} which aggregates the importance sampling ratios of all user agents to measure the joint policy shift of the user group. The BS policy is then updated using the corrected advantage $A_t^{\rm bs}=C_t^{\rm user}A_t$, so that the group-specific advantages are:
\begin{equation}
    A_t^{g}=
    \begin{cases}
    A_t, & g={\rm user},\\
    C_t^{\rm user}A_t, & g={\rm bs}.
    \end{cases}
    \label{eq:group_advantage}
\end{equation}

The sequential \emph{group-level} update of HAPPO provides a theoretically grounded scheme for heterogeneous agent groups: by re-weighting the BS advantage with the realized user-policy shift, it accounts for the effect of one group's update on the other. This makes it better suited to coordinating agents with conflicting objectives under the coupled constraint structure than simultaneous-update backbones such as MAPPO, which apply the same advantage $A_t$ to all agents without such a correction.

\begin{remark}[Budget-Aware Action Masking]\label{rem:masking}
At inference time, budget-aware action masking enforces hard constraints by restricting each agent's feasible action set based on the remaining budgets $\tilde{E}_b(t)$ and $\tilde{H}_u(t)$: BS $b$ is forced inactive when $\tilde{E}_b(t) < \bar{e}_b$, and user $u$ is prohibited from triggering a handover when $\tilde{H}_u(t) = 0$. During training, action masking is intentionally withheld, allowing agents to experience diverse activation patterns including budget-depletion scenarios that are essential for learning proactive budget allocation. This mechanism is model-agnostic and compatible with all three variants, providing a provable per-trajectory feasibility guarantee (Proposition~\ref{prop:feasibility}) independent of how well the learned policy respects the constraints during training.
\end{remark}

\subsection{Constrained HAPPO: Jensen-HAPPO and PF-HAPPO}
\label{subsec:constrained_happo}
When the Jensen reward~\eqref{eq:jensen_reward} or PF reward~\eqref{eq:pf_reward} is adopted, energy and handover constraint awareness is provided through the normalized remaining budgets
\begin{align}
  \mathrm{Rem}_b^E(t)
  &= \frac{E_b^{\max} - \sum_{\tau=0}^{t-1}\bar{e}_b y_b(\tau)}{E_b^{\max}},
  \label{eq:rem_e} \\
  \mathrm{Rem}_u^H(t)
  &= \frac{H_u^{\max} - \sum_{\tau=1}^{t-1} h_u(\tau)}{H_u^{\max}},
  \label{eq:rem_h}
\end{align}
which enter the state and observations as follows. The centralized critic observes
$s(t)=\bigl(\{\hat{r}_{u,b}(t)\}_{u,b}, \{\mathrm{Rem}_b^E(t)\}_b,
\{\mathrm{Rem}_u^H(t)\}_u, \{m_u(t{-}1)\}_u\bigr)$, while each user and each BS observe the locally available subsets
\begin{align}
  o_u^{\rm user}(t) &= \Bigl(\{\hat{r}_{u,b}(t),\,\mathrm{Rem}_b^E(t)\}_b,\,
    \mathrm{Rem}_u^H(t),\, m_u(t{-}1)\Bigr),
  \label{eq:obs_user_ab}\\
  o_b^{\rm bs}(t) &= \Bigl(\mathrm{Rem}_b^E(t),\,
    \{\hat{r}_{u,b}(t),\,\mathrm{Rem}_u^H(t)\}_{u \in\mathcal{C}_b(t)}\Bigr).
  \label{eq:obs_bs_ab}
\end{align}
To solve the constrained MARL formulation~\eqref{eq:marl_jensen}, we adopt Lagrangian relaxation~\cite{gu2022macpo}, the standard approach for handling constraints in MARL; trust-region alternatives such as CPO~\cite{achiam2017cpo} are designed for continuous control and do not apply to our discrete actions. The constraints are incorporated via per-BS dual variables $\mu_E^b\ge0$ and per-user dual variables $\mu_H^u\ge0$, augmenting the per-slot reward as
\begin{equation}
  \tilde{r}_t = r_t - \sum_{b=1}^{B} \mu_E^b\,\bar{e}_b y_b(t)
  - \sum_{u=1}^{U} \mu_H^u\, h_u(t),
  \label{eq:aug_reward}
\end{equation}
where $r_t$ is either~\eqref{eq:jensen_reward} or~\eqref{eq:pf_reward}. Initialized at $\mu_E^b(1)=\mu_H^u(1)=0$, the dual variables are updated at the end of each training episode $k$ by projected subgradient ascent,
\begin{align}
  \mu_E^b(k{+}1) &= \bigl[\mu_E^b(k) + \beta_k C_E^b(k)\bigr]^+,
  \label{eq:dual_E} \\
  \mu_H^u(k{+}1) &= \bigl[\mu_H^u(k) + \beta_k C_H^u(k)\bigr]^+,
  \label{eq:dual_H}
\end{align}
with step size $\beta_k=\beta/\sqrt{K}$, where $K$ is the total number of training episodes and
\begin{align}
  C_E^b(k) &= \tfrac{1}{T}\sum_{t=0}^{T-1} \bar{e}_b y_b(t) - \bar{E}_b,
  \label{eq:cost_E} \\
  C_H^u(k) &= \tfrac{1}{T}\sum_{t=1}^{T-1} h_u(t) - \bar{H}_u,
  \label{eq:cost_H}
\end{align}
are the episode-level constraint violations. The actor policies are updated every $L$ time slots via the HAPPO objective~\eqref{eq:happo_loss}, using advantages computed from $\tilde{r}_t$.

\subsection{Unconstrained HAPPO via DPP: HeLyMARL}
\label{subsec:unconstrained_happo}
When the DPP reward~\eqref{eq:dpp_reward} is adopted, the virtual queues take the place of the remaining-budget states of Section~\ref{subsec:constrained_happo}. The centralized critic observes all three queues together with the channel estimates and the serving BS memories, $s(t)=\bigl(\{Q_u(t), G_u(t), m_u(t{-}1)\}_u, \{Z_b(t)\}_b,\{\hat{r}_{u,b}(t)\}_{u,b}\bigr)$, while the local observations become
\begin{align}
  o_u^{\rm user}(t) &= \Bigl(Q_u(t),\, G_u(t),\, m_u(t{-}1),\,
    \{\hat{r}_{u,b}(t),\, Z_b(t)\}_b\Bigr),
  \label{eq:obs_user_c}\\
  o_b^{\rm bs}(t) &= \Bigl(Z_b(t),\,
    \{Q_u(t),\, G_u(t),\, \hat{r}_{u,b}(t)\}_{u\in\mathcal{C}_b(t)}\Bigr).
  \label{eq:obs_bs_c}
\end{align}
At each episode boundary the queues are reset to $Z_b(0)=G_u(0)=0$ and $Q_u(0)=\epsilon>0$, so that constraint tracking restarts independently across episodes during training.

The fairness, energy, and handover pressures therefore enter the learning problem through two channels: as the weights of the reward~\eqref{eq:dpp_reward}, and as components of the state and observations. The latter allows the policy to condition its actions on the current constraint pressure rather than merely being rewarded for the outcome, eliminating the need for explicit Lagrangian penalties. The resulting intra-episode budget regulation is characterized in Section~\ref{sec:theory}. The actor policies are updated every $L$ time slots via the HAPPO objective~\eqref{eq:happo_loss}, and the overall procedure is summarized in Algorithm~\ref{alg:helymarl}. The constrained variants of Section~\ref{subsec:constrained_happo} follow the same procedure with the augmented reward $\tilde{r}_t$~\eqref{eq:aug_reward} in place of the DPP reward and the dual updates~\eqref{eq:dual_E}--\eqref{eq:dual_H} at each episode boundary.

\begin{algorithm}[t]
\caption{HeLyMARL Training}
\label{alg:helymarl}
\begin{algorithmic}[1]
\REQUIRE Horizon $T$; update interval $L$; candidate set size $N_c$;
         initial policies $\pi_{\theta^{\rm user}}$,
         $\pi_{\theta^{\rm bs}}$; critic $V_\phi$
\ENSURE Updated $(\theta^{\rm user}, \theta^{\rm bs}, \phi)$
\FOR{episode $k = 1, 2, \ldots, K$}
    \STATE Initialize virtual queues $Q_u(0) \leftarrow \epsilon$,
           $G_u(0) \leftarrow 0$, $\forall u$, and
           $Z_b(0) \leftarrow 0$, $\forall b$;
           rollout buffer $\mathcal{D} \leftarrow \emptyset$
    \FOR{$t = 0, 1, \ldots, T-1$}
        \STATE \textbf{User stage:} each user $u$ observes
               $o_u^{\rm user}(t)$ via~\eqref{eq:obs_user_c} and
               samples $a_u^{\rm user}(t) \sim
               \pi_{\theta^{\rm user}}(\cdot\,|\,o_u^{\rm user}(t))$
        \STATE Each BS $b$ forms $\mathcal{U}_b^{\rm req}(t)$, builds
               $\mathcal{C}_b(t)$ (top-$N_c$ by $s_{u,b}(t)$
               in~\eqref{eq:score}), and observes $o_b^{\rm bs}(t)$
               via~\eqref{eq:obs_bs_c}
        \STATE \textbf{BS stage:} each BS $b$ samples
               $a_b^{\rm bs}(t) \sim
               \pi_{\theta^{\rm bs}}(\cdot\,|\,o_b^{\rm bs}(t))$
        \STATE Execute actions; observe $\{R_u(t)\}$, $\{h_u(t)\}$
        \STATE Compute DPP reward $r_t$ via~\eqref{eq:dpp_reward}
        \STATE Update virtual queues via~\eqref{eq:Q}--\eqref{eq:G}
        \STATE Store transition $\bigl(s(t), \{o(t)\}, \{a(t)\},
               r_t\bigr)$ in $\mathcal{D}$
        \IF{$(t{+}1) \bmod L = 0$ \textbf{or} $t = T-1$}
            \STATE Set $d_t \leftarrow \mathbf{1}\{t = T-1\}$; compute
                   GAE advantages $\{A_t\}$
                   via~\eqref{eq:td_error}--\eqref{eq:gae}
            \STATE Update $V_\phi$, then $\pi_{\theta^{\rm user}}$ and
                   $\pi_{\theta^{\rm bs}}$ sequentially
                   via~\eqref{eq:happo_loss} with group-specific
                   advantages~\eqref{eq:group_advantage}, over $E$
                   epochs with minibatches of size $M$
            \STATE $\pi_{\rm old} \leftarrow \pi_\theta$;\;
                   $\mathcal{D} \leftarrow \emptyset$
        \ENDIF
    \ENDFOR
\ENDFOR
\end{algorithmic}
\end{algorithm}

\section{Theoretical Analysis}
\label{sec:theory}
We analyze constraint satisfaction and convergence for the proposed framework. Proposition~\ref{prop:feasibility} establishes a per-trajectory feasibility guarantee under budget-aware action masking, which applies to all variants at inference time. Theorem~\ref{thm:dual_conv} bounds the episode-averaged constraint violation of Constrained HAPPO during training, showing that it decays at rate $\mathcal{O}(1/\sqrt{K})$. For HeLyMARL, the virtual queues instead provide time-varying constraint signals that tighten as the budget is consumed, yielding the intra-episode pacing property of Proposition~\ref{prop:intra_episode} under a regularity condition on the learned policy. Section~\ref{subsec:joint} combines these results and shows that the two mechanisms operate on different timescales.

\subsection{Per-Trajectory Feasibility}
\label{subsec:feasibility}
Virtual queue penalties and dual variable updates shape constraint satisfaction only in expectation over trajectories, and their guarantees depend on how well the policy has been trained. Budget-aware action masking, by contrast, prevents budget violations \emph{deterministically} on every trajectory at inference time, regardless of the per-slot reward design or the quality of the learned policy.

\begin{proposition}[Per-Trajectory Feasibility]
\label{prop:feasibility}
Under the inference-time action masking of Remark~\ref{rem:masking}, for any trained policy $\pi_{\theta}$ and any environment realization, the following hold:
\begin{align}
  \sum_{t=0}^{T-1}\bar{e}_{b}y_{b}(t)&\leq E_{b}^{\max},
  \quad\forall b,\label{eq:prop_energy}\\
  \sum_{t=1}^{T-1}h_{u}(t)&\leq H_{u}^{\max},
  \quad\forall u.\label{eq:prop_ho}
\end{align}
\end{proposition}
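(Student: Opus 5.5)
The plan is a direct induction over the slots of a single trajectory. The masking rule depends only on the realized history, so the argument is independent of $\pi_\theta$ and of the environment realization. First I will make the remaining budgets of Remark~\ref{rem:masking} explicit, consistent with \eqref{eq:rem_e}--\eqref{eq:rem_h}:
\begin{equation*}
\tilde{E}_b(t)=E_b^{\max}-\sum_{\tau=0}^{t-1}\bar{e}_b y_b(\tau),\qquad
\tilde{H}_u(t)=H_u^{\max}-\sum_{\tau=1}^{t-1}h_u(\tau).
\end{equation*}
The invariant to prove is $\tilde{E}_b(t)\ge 0$ and $\tilde{H}_u(t)\ge 0$ for all $t\in\{0,\dots,T\}$. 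The claims \eqref{eq:prop_energy}--\eqref{eq:prop_ho} are exactly this invariant at $t=T$.

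\textbf{Energy.} The base case is $\tilde{E}_b(0)=E_b^{\max}=\eta\bar e_b T\ge 0$. For the inductive step, assume $\tilde{E}_b(t)\ge0$ and split into two cases.
\begin{itemize}
\item If $\tilde{E}_b(t)<\bar e_b$, the mask forces $y_b(t)=0$ (the action $a_b^{\rm bs}(t)=0$), so $\tilde{E}_b(t+1)=\tilde{E}_b(t)\ge0$.
\item Otherwise $y_b(t)\in\{0,1\}$ and $\tilde{E}_b(t+1)\ge\tilde{E}_b(t)-\bar e_b\ge0$.
\end{itemize}
One point needs checking: the masked action set must be nonempty. It is, because the inactive action is always available, and feasibility \eqref{eq:feas_c} then holds automatically through \eqref{eq:sched}.

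\textbf{Handovers.} This side is slightly subtler, and I expect it to be the main obstacle. A handover \eqref{eq:ho} is a joint event: a user-side request for $b\neq m_u(t-1)$ together with BS-side scheduling. The mask acts only on the user. So I must argue that when $\tilde H_u(t)=0$, prohibiting handover means the user's request is restricted to $a_u^{\rm user}(t)=m_u(t-1)$ whenever $m_u(t-1)\neq0$. Under that restriction, whatever the BSs do, either the user is unserved or it is served by $m_u(t-1)$, and in both cases $h_u(t)=0$ by \eqref{eq:ho}. The same action is also always available, which gives nonemptiness. If $m_u(t-1)=0$, then $h_u(t)=0$ trivially.

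With that settled, the induction mirrors the energy case. The base case is $\tilde H_u(1)=H_u^{\max}\ge0$, using $\kappa\ge0$ and the floor. The step uses $h_u(t)\in\{0,1\}$ with integer $\tilde H_u(t)$: if $\tilde H_u(t)\ge1$ then $\tilde H_u(t+1)\ge0$, and if $\tilde H_u(t)=0$ the mask gives $h_u(t)=0$. Integrality matters here because $H_u^{\max}$ is an integer; that is why the mask threshold ``$=0$'' suffices, in contrast to the threshold $\bar e_b$ used on the energy side.

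Finally I will note that the candidate-set truncation and BS scheduling cannot create an unmasked handover. The BS may only schedule users who requested it, so the user-side mask is sufficient. Evaluating the invariants at $t=T$ then gives the result for every realization.
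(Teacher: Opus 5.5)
Your proof is correct and follows the same route as the paper's: an induction on the remaining budget $\tilde{E}_b(t)$, using the fact that the mask guarantees $\bar{e}_b y_b(t)\le\tilde{E}_b(t)$ at every slot, so that $\tilde{E}_b(T)\ge 0$. The paper only states that the handover case is identical, whereas you spell it out: you show why a user-side mask suffices for the jointly owned event $h_u(t)$ (the request is restricted to $m_u(t-1)$, and BSs only schedule requesters), and why integrality of $H_u^{\max}$ justifies the threshold $\tilde{H}_u(t)=0$. That is a worthwhile clarification rather than a different argument.
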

\begin{IEEEproof}
We prove~\eqref{eq:prop_energy}; the proof of~\eqref{eq:prop_ho} is
identical, with the mask forbidding a handover whenever the remaining
handover budget is exhausted. Let
$\tilde{E}_{b}(t)=E_{b}^{\max}-\sum_{\tau=0}^{t-1}\bar{e}_{b}y_{b}(\tau)$
denote the remaining budget, so that $\tilde{E}_{b}(t+1)
=\tilde{E}_{b}(t)-\bar{e}_{b}y_{b}(t)$ with
$\tilde{E}_{b}(0)=E_{b}^{\max}>0$. At every slot, the masking rule ensures
$\bar{e}_{b}y_{b}(t)\le\tilde{E}_{b}(t)$: if
$\tilde{E}_{b}(t)<\bar{e}_{b}$ the mask forces $y_{b}(t)=0$, and otherwise
$\bar{e}_{b}y_{b}(t)\le\bar{e}_{b}\le\tilde{E}_{b}(t)$. Hence
$\tilde{E}_{b}(t)\ge0$ for all $t$ by induction, and in particular
$\tilde{E}_{b}(T)\ge0$, which is~\eqref{eq:prop_energy}. The argument uses
only the masking rule and the budget recursion, so it holds for every
realization and every $\pi_{\theta}$.
\end{IEEEproof}

\subsection{Training Convergence under Constrained HAPPO}
\label{subsec:convergence}
We analyze the dual variable updates~\eqref{eq:dual_E}--\eqref{eq:dual_H} of Constrained HAPPO. For notational brevity we collect the $I=B+U$ constraints into a single index $i$, writing $\bm{\mu}(k)\in\mathbb{R}_{+}^{I}$ for the stacked dual vector $\bigl(\{\mu_{E}^{b}(k)\}_{b},\{\mu_{H}^{u}(k)\}_{u}\bigr)$ and $\bm{C}(k)\in\mathbb{R}^{I}$ for the corresponding episode-level constraint violations $\bigl(\{C_{E}^{b}(k)\}_{b},\{C_{H}^{u}(k)\}_{u}\bigr)$ defined in~\eqref{eq:cost_E}--\eqref{eq:cost_H}. With $C_{i}(\theta_{k})\triangleq\mathbb{E}[C_{i}(k)]$ denoting the expected violation of constraint $i$ under the policy $\theta_{k}$ deployed in episode $k$, the Lagrangian is
\begin{equation}
  L(\theta,\bm{\mu})\;=\;J(\theta)-\bm{\mu}^{\!\top}\bm{C}(\theta),
  \label{eq:lagrangian}
\end{equation} where $J(\theta)=\mathbb{E}_{\pi_\theta}[\sum_{t=0}^{T-1}r_t]$ is the
expected return under the adopted per-slot reward, and the updates~\eqref{eq:dual_E}--\eqref{eq:dual_H} read compactly as $\bm{\mu}(k{+}1)=[\bm{\mu}(k)+\beta_{k}\bm{C}(k)]^{+}$ with $\bm{\mu}(1)=\bm{0}$.

Assumptions~\ref{ass:slater} and~\ref{ass:bounded} are standard regularity conditions in primal--dual constrained optimization~\cite{neely2010stochastic}. Assumption~\ref{ass:pg} is the approximate-optimality condition adopted in constrained MARL analysis~\cite{gu2022macpo}, which relaxes the exact monotonic improvement guarantee of HAPPO~\cite{kuba2021trust} to allow a bounded approximation error; unlike the vanishing-error conditions commonly imposed, we only require the error to be uniformly bounded.

\begin{assumption}[Slater Condition]
\label{ass:slater}
There exist a policy $\pi_{0}$ and $\delta>0$ such that $\mathbb{E}_{\pi_{0}}[C_{i}]\le-\delta$ for all $i\in\{1,\dots,I\}$.
\end{assumption}

\begin{assumption}[Bounded Costs]
\label{ass:bounded}
$\|\bm{C}(\theta)\|_{2}\le c$ for all policies $\theta$, where $c\triangleq\sqrt{B\,c_{E}^{2}+U\,c_{H}^{2}}$ and $c_{E}$, $c_{H}$ bound $|C_{E}^{b}|$ and $|C_{H}^{u}|$, respectively.
\end{assumption}

\begin{assumption}[Approximate Lagrangian Maximization]
\label{ass:pg}
The HAPPO update at episode $k$ satisfies
\begin{equation}
  L\bigl(\theta_{k},\bm{\mu}(k)\bigr)\;\ge\;
  \sup_{\theta}L\bigl(\theta,\bm{\mu}(k)\bigr)-\varepsilon_{k},\;
  \bar{\varepsilon}\triangleq\sup_{k}\varepsilon_{k}<\infty,
  \label{eq:approx_max}
\end{equation}
and $J(\theta_{k})-J(\pi_{0})\le c_{J}$ for all $k$ and some $c_{J}>0$,
where $\pi_{0}$ is the Slater policy of Assumption~\ref{ass:slater}.
\end{assumption}

\begin{remark}[Exact Dual Dynamics]
\label{rem:exact_dual}
For clarity of exposition we analyze the mean dual dynamics, i.e., we take $\bm{C}(k)=\bm{C}(\theta_{k})$ in the update. This is the standard convention in dual subgradient analysis and is well justified in our setting, since each episode cost~\eqref{eq:cost_E}--\eqref{eq:cost_H} is an average over $T$ slots and therefore concentrates sharply around its mean for the long horizons considered in this work. The stochastic case follows by standard martingale concentration arguments, which affect only the constants below.
\end{remark}
\vspace{0.1cm}

We first establish that the dual iterates remain bounded uniformly in the number of training episodes $K$, the key structural property from which the convergence rate follows.

\begin{lemma}[Dual Boundedness]
\label{lem:dual_bound}
Under Assumptions~\ref{ass:slater}--\ref{ass:pg} with
$\beta_{k}=\beta/\sqrt{K}$, the dual iterates satisfy
\begin{equation*}
  \bigl\|\bm{\mu}(k)\bigr\|_{2}\;\le\;\mu_{\max}
  \triangleq\sqrt{\bigl(\bar{\mu}+\beta c\bigr)^{2}+\beta^{2}c^{2}},
  \qquad
  \bar{\mu}\triangleq\frac{c_{J}+\bar{\varepsilon}}{\delta},
\end{equation*}
for all $k=1,\dots,K{+}1$, where $\mu_{\max}$ is independent of $K$.
\end{lemma}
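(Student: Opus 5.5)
The proof combines the Slater condition with approximate Lagrangian maximization to obtain a drift condition on the dual iterates, and then runs the standard "bounded-increment plus negative-drift" induction. All quantities are the mean dual dynamics of Remark~\ref{rem:exact_dual}, so $\bm{C}(k)=\bm{C}(\theta_k)$.

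\textbf{Step 1 (negative drift when $\bm\mu$ is large).} Fix $k$. By Assumption~\ref{ass:pg}, with $\sup_\theta L(\theta,\bm\mu(k))\ge L(\pi_0,\bm\mu(k))$, we get
\begin{equation*}
J(\theta_k)-\bm\mu(k)^{\!\top}\bm C(\theta_k)\ge J(\pi_0)-\bm\mu(k)^{\!\top}\bm C(\pi_0)-\bar\varepsilon .
\end{equation*}
By Assumption~\ref{ass:slater} and $\bm\mu\ge 0$, we have $-\bm\mu(k)^{\!\top}\bm C(\pi_0)\ge\delta\|\bm\mu(k)\|_1\ge\delta\|\bm\mu(k)\|_2$. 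Combining this with $J(\theta_k)-J(\pi_0)\le c_J$ gives
\begin{equation*}
\bm\mu(k)^{\!\top}\bm C(\theta_k)\le c_J+\bar\varepsilon-\delta\|\bm\mu(k)\|_2 .
\end{equation*}
Hence $\bm\mu(k)^{\!\top}\bm C(k)\le 0$ whenever $\|\bm\mu(k)\|_2\ge\bar\mu$.

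\textbf{Step 2 (one-step norm recursion).} Projection onto $\mathbb R_+^I$ is nonexpansive and $\bm 0\in\mathbb R_+^I$. Together with Assumption~\ref{ass:bounded}, this gives
\begin{equation*}
\|\bm\mu(k+1)\|_2^2\le\|\bm\mu(k)\|_2^2+2\beta_k\bm\mu(k)^{\!\top}\bm C(k)+\beta_k^2c^2 .
\end{equation*}
We also have the crude increment bound $\|\bm\mu(k+1)\|_2\le\|\bm\mu(k)\|_2+\beta_k c$. Since $\beta_k=\beta/\sqrt K\le\beta$, each $\beta_k$ may be replaced by $\beta$ in the constants, which is why $\mu_{\max}$ is independent of $K$.

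\textbf{Step 3 (induction).} We prove $\|\bm\mu(k)\|_2\le\mu_{\max}$ by induction. The base case holds because $\bm\mu(1)=\bm 0$. For the inductive step there are two cases.

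\emph{Case $\|\bm\mu(k)\|_2<\bar\mu$.} The increment bound gives $\|\bm\mu(k+1)\|_2\le\bar\mu+\beta c\le\mu_{\max}$.

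\emph{Case $\|\bm\mu(k)\|_2\ge\bar\mu$.} Step 1 kills the cross term, so $\|\bm\mu(k+1)\|_2^2\le\|\bm\mu(k)\|_2^2+\beta^2c^2$. This alone does not close the induction, because the additive $\beta^2c^2$ could accumulate over consecutive large steps.

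To handle this I would refine the case split. If $\|\bm\mu(k)\|_2\ge\bar\mu+\beta c$, Step 1 gives the strict drift
\begin{equation*}
2\beta_k\bm\mu^{\!\top}\bm C\le-2\beta_k\delta\bigl(\|\bm\mu\|_2-\bar\mu\bigr)\le-2\beta_k\delta\beta c .
\end{equation*}
I would then pick the threshold so that this dominates $\beta_k^2c^2$, making the norm nonincreasing in this region. If instead $\bar\mu\le\|\bm\mu(k)\|_2<\bar\mu+\beta c$, then
\begin{equation*}
\|\bm\mu(k+1)\|_2^2\le(\bar\mu+\beta c)^2+\beta^2c^2=\mu_{\max}^2 .
\end{equation*}
So once the iterate leaves the band it can only move down until it re-enters the band, and every visit to the band is followed by a step of size at most $\mu_{\max}$.

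\textbf{Main obstacle.} The delicate point is the strict-decrease region. Step 1 only delivers a drift rate proportional to $\delta(\|\bm\mu\|_2-\bar\mu)$, and this must dominate $\beta_k c^2/2$. If the band width $\beta c$ turns out not to suffice, I would enlarge the threshold in the refined split (for example to $\bar\mu+\beta c^2/(2\delta)$). Doing so changes only constants that are still $K$-independent, and I would then reconcile the resulting bound with the stated form of $\mu_{\max}$.
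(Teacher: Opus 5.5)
Your Step~1 and the one-step recursion of Step~2 match the paper's. However, the induction in Step~3 is never closed, so the proposal does not establish the stated constant.

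The refined band argument needs the norm to be nonincreasing above $\bar{\mu}+\beta c$. That requires $-2\beta_k\delta\beta c+\beta_k^{2}c^{2}\le 0$, which is equivalent to $\sqrt{K}\ge c/(2\delta)$. Nothing guarantees this. Assumptions~\ref{ass:slater} and~\ref{ass:bounded} together force $\|\bm{C}(\pi_0)\|_2\ge\sqrt{I}\,\delta$, hence $c\ge\sqrt{I}\,\delta$. So for $I>4$ the condition already fails at $K=1$, and in general $c/\delta$ is unconstrained.

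Your fallback threshold $\bar{\mu}+\beta c^{2}/(2\delta)$ does give a valid $K$-independent bound; this is the classical Slater-type dual bound, valid for any $\beta_k\le\beta$. But its constant $\sqrt{(\bar{\mu}+\beta c^{2}/(2\delta))^{2}+\beta^{2}c^{2}}$ is strictly larger than $\mu_{\max}$ whenever $c>2\delta$, so there is nothing to ``reconcile.'' That bound would still suffice for the $\mathcal{O}(1/\sqrt{K})$ rate of Theorem~\ref{thm:dual_conv}, but not for the lemma as stated.

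The missing idea sits in Step~2. There you replace $\beta_k$ by $\beta$ and attribute the $K$-independence to $\beta_k\le\beta$. The actual mechanism is $\beta_k^{2}=\beta^{2}/K$ together with $k\le K+1$. The second summand $\beta^{2}c^{2}$ in $\mu_{\max}^{2}$ is exactly the accumulated increment you were worried about.

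In the large case you need only the sign $\bm{\mu}(k)^{\top}\bm{C}(\theta_k)\le 0$ from Step~1, not a strict drift. Induct on $\|\bm{\mu}(k)\|_2^{2}\le(\bar{\mu}+\beta_k c)^{2}+(k-1)\beta_k^{2}c^{2}$:
\begin{itemize}
\item If $\|\bm{\mu}(k)\|_2\le\bar{\mu}$, nonexpansiveness gives $\|\bm{\mu}(k+1)\|_2\le\bar{\mu}+\beta_k c$.
\item Otherwise the cross term is nonpositive, and the bound grows by $\beta_k^{2}c^{2}$.
\end{itemize}
For $k\le K+1$ one has $(k-1)\beta_k^{2}c^{2}\le\beta^{2}c^{2}$ and $\beta_k c\le\beta c$, which yields $\mu_{\max}^{2}$.

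This is the paper's proof. The accumulation over consecutive large steps is allowed to happen and is absorbed by the step-size schedule, rather than prevented by a drift argument.
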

\begin{IEEEproof}
Since $\mu_{i}(k)\ge0$ and $\mathbb{E}_{\pi_{0}}[C_{i}]\le-\delta$ by
Assumption~\ref{ass:slater}, we have
$L(\pi_{0},\bm{\mu}(k))\ge J(\pi_{0})+\delta\|\bm{\mu}(k)\|_{1}$.
Combining this with $\sup_{\theta}L(\theta,\bm{\mu}(k))\ge
L(\pi_{0},\bm{\mu}(k))$, the approximate
maximization~\eqref{eq:approx_max}, and
$L(\theta_{k},\bm{\mu}(k))=J(\theta_{k})-\bm{\mu}(k)^{\!\top}
\bm{C}(\theta_{k})$ yields $\delta\|\bm{\mu}(k)\|_{1}\le
c_{J}+\bar{\varepsilon}-\bm{\mu}(k)^{\!\top}\bm{C}(\theta_{k})$. Hence
$\|\bm{\mu}(k)\|_{2}\le\|\bm{\mu}(k)\|_{1}\le\bar{\mu}$ whenever
$\bm{\mu}(k)^{\!\top}\bm{C}(\theta_{k})\ge0$, or equivalently, by
contraposition,
\begin{equation}
  \bigl\|\bm{\mu}(k)\bigr\|_{2}>\bar{\mu}
  \;\;\Longrightarrow\;\;
  \bm{\mu}(k)^{\!\top}\bm{C}(\theta_{k})<0.
  \label{eq:contrapositive}
\end{equation}
We now show by induction that $\|\bm{\mu}(k)\|_{2}^{2}\le
(\bar{\mu}+\beta_{k}c)^{2}+(k-1)\beta_{k}^{2}c^{2}$, the base case
$\bm{\mu}(1)=\bm{0}$ being immediate. Assume the bound holds at episode
$k$. If $\|\bm{\mu}(k)\|_{2}\le\bar{\mu}$, non-expansiveness of
$[\cdot]^{+}$ and Assumption~\ref{ass:bounded} give
$\|\bm{\mu}(k{+}1)\|_{2}\le\bar{\mu}+\beta_{k}c$. Otherwise
$\bm{\mu}(k)^{\!\top}\bm{C}(\theta_{k})<0$
by~\eqref{eq:contrapositive}, so that
$\|\bm{\mu}(k{+}1)\|_{2}^{2}\le\|\bm{\mu}(k)\|_{2}^{2}
+2\beta_{k}\bm{\mu}(k)^{\!\top}\bm{C}(\theta_{k})+\beta_{k}^{2}c^{2}
\le\|\bm{\mu}(k)\|_{2}^{2}+\beta_{k}^{2}c^{2}$. In either case the bound
holds at $k{+}1$. Substituting $\beta_{k}=\beta/\sqrt{K}$ and
$k\le K{+}1$ gives $(k-1)\beta_{k}^{2}c^{2}\le\beta^{2}c^{2}$ and
$\beta_{k}c\le\beta c$, which completes the proof.
\end{IEEEproof}

\vspace{0.1cm}
\begin{theorem}[Average Constraint Violation]
\label{thm:dual_conv}
Under Assumptions~\ref{ass:slater}--\ref{ass:pg} with $\beta_{k}=\beta/\sqrt{K}$ for $k=1,\dots,K$, the dual
updates~\eqref{eq:dual_E}--\eqref{eq:dual_H} satisfy
\begin{equation}
  \left[\frac{1}{K}\sum_{k=1}^{K}C_{i}(\theta_{k})\right]^{+}
  \!\!\le\;\frac{\mu_{\max}}{\beta\sqrt{K}}
  \;=\;\mathcal{O}\!\left(\frac{1}{\sqrt{K}}\right),
  \qquad\forall i,
  \label{eq:thm_conv}
\end{equation}
where $\mu_{\max}$ is the $K$-independent constant of Lemma~\ref{lem:dual_bound}. In particular, the episode-averaged energy and handover violations both vanish at rate $\mathcal{O}(1/\sqrt{K})$.
\end{theorem}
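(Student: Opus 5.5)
The plan is the standard telescoping argument for projected dual subgradient ascent, with Lemma~\ref{lem:dual_bound} supplying the $K$-independent bound on the final iterate. Under the mean dual dynamics of Remark~\ref{rem:exact_dual}, the componentwise update is $\mu_{i}(k{+}1)=[\mu_{i}(k)+\beta_{k}C_{i}(\theta_{k})]^{+}$. Because the projection onto $\mathbb{R}_{+}$ can only increase its argument, we get the one-step inequality
\begin{equation*}
  \mu_{i}(k{+}1)\;\ge\;\mu_{i}(k)+\beta_{k}C_{i}(\theta_{k}),
  \qquad k=1,\dots,K.
\end{equation*}
This is the only property of the update we need.

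Next we sum this inequality over $k=1,\dots,K$. With the constant step $\beta_{k}=\beta/\sqrt{K}$, the sum telescopes to
\begin{equation*}
  \frac{\beta}{\sqrt{K}}\sum_{k=1}^{K}C_{i}(\theta_{k})
  \;\le\;\mu_{i}(K{+}1)-\mu_{i}(1)\;=\;\mu_{i}(K{+}1),
\end{equation*}
where we used $\bm{\mu}(1)=\bm{0}$. Dividing by $\beta\sqrt{K}$ gives $\frac{1}{K}\sum_{k}C_{i}(\theta_{k})\le\mu_{i}(K{+}1)/(\beta\sqrt{K})$.

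It remains to bound $\mu_{i}(K{+}1)$. We have $0\le\mu_{i}(K{+}1)\le\|\bm{\mu}(K{+}1)\|_{2}$, and Lemma~\ref{lem:dual_bound} holds for every $k$ up to $K{+}1$, so $\|\bm{\mu}(K{+}1)\|_{2}\le\mu_{\max}$. Since the right-hand side is nonnegative, the bound carries over to the positive part, which yields~\eqref{eq:thm_conv}. Specializing $i$ to the $B$ energy indices and the $U$ handover indices gives the final sentence of the theorem.

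There is essentially no obstacle here once Lemma~\ref{lem:dual_bound} is available; the substantive work, uniform boundedness of the duals via the Slater condition and approximate maximization, is done there. The one point needing care is that the lemma must be applied at $k=K{+}1$, which it explicitly covers. If one wanted to drop Remark~\ref{rem:exact_dual}, the same telescoping with $C_{i}(k)$ in place of $C_{i}(\theta_{k})$ would produce a martingale difference term $\frac{1}{K}\sum_{k}(C_{i}(k)-C_{i}(\theta_{k}))$. Since the costs are bounded, Azuma's inequality makes this term $\mathcal{O}(1/\sqrt{K})$ with high probability, so only the constants change.
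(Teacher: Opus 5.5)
Your proof is correct and matches the paper's argument step for step: the one-sided inequality $\mu_i(k{+}1)\ge\mu_i(k)+\beta_k C_i(\theta_k)$ from $[z]^+\ge z$, telescoping with $\bm{\mu}(1)=\bm{0}$, bounding $\mu_i(K{+}1)\le\|\bm{\mu}(K{+}1)\|_2\le\mu_{\max}$ via Lemma~\ref{lem:dual_bound} at $k=K{+}1$, and passing to the positive part because the right-hand side is nonnegative.

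Your closing aside on dropping Remark~\ref{rem:exact_dual} is not needed for the theorem as stated. It also understates what would change: the telescoping step is not the only one affected. The induction in Lemma~\ref{lem:dual_bound} relies on the sign of $\bm{\mu}(k)^{\top}\bm{C}(\theta_k)$ for the quantity that actually enters the update. With sampled costs $\bm{C}(k)$, that pathwise bound on $\mu_i(K{+}1)$ would itself have to be re-derived probabilistically, so Azuma applied to the telescoped sum alone does not suffice.
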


\begin{IEEEproof}
Since $[z]^{+}\ge z$ for all $z\in\mathbb{R}$, the dual update gives $\mu_{i}(k{+}1)\ge\mu_{i}(k)+\beta_{k}C_{i}(\theta_{k})$ componentwise, i.e.,
\begin{equation*}
  C_{i}(\theta_{k})\;\le\;\frac{\mu_{i}(k{+}1)-\mu_{i}(k)}{\beta_{k}} .
\end{equation*}
Summing over $k=1,\dots,K$ with the constant step size $\beta_{k}=\beta/\sqrt{K}$ telescopes the right-hand side, and using $\bm{\mu}(1)=\bm{0}$ together with $\mu_{i}(K{+}1)\le\|\bm{\mu}(K{+}1)\|_{2}\le\mu_{\max}$ from Lemma~\ref{lem:dual_bound},
\begin{equation*}
  \sum_{k=1}^{K}C_{i}(\theta_{k})
  \;\le\;\frac{\sqrt{K}}{\beta}\,\mu_{i}(K{+}1)
  \;\le\;\frac{\sqrt{K}}{\beta}\,\mu_{\max}.
\end{equation*}
Dividing by $K$ and taking the positive part of both sides (which preserves the inequality since the right-hand side is positive) yields \eqref{eq:thm_conv}.
\end{IEEEproof}

Theorem~\ref{thm:dual_conv} shows that Lagrangian-based constraint handling regulates the budgets on an \emph{inter-episode} timescale: the dual variables accumulate constraint pressure across episodes, and the episode-averaged violation decays as $\mathcal{O}(1/\sqrt{K})$. What this guarantee does---and does not---imply for the temporal distribution of budget consumption within an episode is taken up in Section~\ref{subsec:joint}, after the intra-episode analysis of HeLyMARL.

\subsection{Intra-Episode Budget Awareness}
\label{subsec:intra_episode}

HeLyMARL and DDPP differ in how they regulate budget consumption within an episode. DDPP relies solely on instantaneous queue values, and since the queues start near zero they provide little constraint pressure, leading to aggressive early activation and premature exhaustion. HeLyMARL instead learns through episodic training to restrain activation already at small queue values. The learning signal is immediate rather than long-range: an over-activation raises the queue, and the raised queue penalizes the very next slots. Per-slot greedy optimization cannot acquire this behavior, because it never evaluates the consequences of its own decisions.

We formalize the distinction as follows. Under a regularity condition on the learned policy (Assumption~\ref{ass:queue_weighted}), the expected cumulative energy consumption of HeLyMARL at any partial horizon is bounded by its proportional allocation up to a sublinear slack (Proposition~\ref{prop:intra_episode}). A counterexample then shows that greedy per-slot control does not satisfy this condition with a horizon-independent constant, so the pacing guarantee does not extend to DDPP.

\begin{assumption}[Expected Queue-Weighted Activation Bound]
\label{ass:queue_weighted}
There exists a constant $\epsilon_{Z}\ge 0$, independent of $t$ and of
the horizon length $T$, such that for all $t\in\{0,\dots,T-1\}$ and all
$b\in\mathcal{B}$,
\begin{equation}\label{eq:ass5}
  \mathbb{E}_{\pi_{\theta}}\!\left[
    Z_{b}(t)\bigl(\bar{e}_{b}y_{b}(t)-\bar{E}_{b}\bigr)
  \right]\;\le\;\epsilon_{Z}.
\end{equation}
\end{assumption}

Assumption~\ref{ass:queue_weighted} bounds the correlation between accumulated budget pressure and excess activation: the left-hand side of~\eqref{eq:ass5} is large when the policy keeps activating BS $b$ even though $Z_{b}(t)$ is already large. Since it is finite for any fixed $T$, the substantive content of the assumption lies in the \emph{uniformity} of $\epsilon_{Z}$ over $t$ and $T$, which fails precisely when the correlation accumulates with the horizon, as occurs under greedy per-slot control.

Two mechanisms promote this condition in HeLyMARL. The DPP reward~\eqref{eq:dpp_reward} contains the penalty $-Z_{b}(t)\bar{e}_{b}y_{b}(t)$, which grows with $Z_{b}(t)$ and thus discourages activation exactly when budget pressure is high. Episodic training moreover lets the policy curtail activation \emph{before} $Z_{b}(t)$ becomes large, so the queue never enters the regime in which the instantaneous penalty alone would have to act, and $\epsilon_{Z}$ absorbs the residual correlation left by an imperfectly trained policy. The condition is also supported empirically: as shown later in Fig.~\ref{fig:energy_constraint} HeLyMARL holds the activation ratio near $\eta$ throughout the horizon, so $\bar{e}_{b}y_{b}(t)-\bar{E}_{b}$ averages close to zero at every $t$ and $Z_{b}(t)$ remains bounded.

\begin{proposition}[Intra-Episode Budget Pacing]
\label{prop:intra_episode}
Under Assumption~\ref{ass:queue_weighted}, for any partial horizon $\tau \in \{0,1,\ldots,T-1\}$ and any BS $b$, HeLyMARL satisfies
\begin{equation}
  \mathbb{E}\!\left[\sum_{t=0}^{\tau}\bar{e}_{b}\,y_{b}(t)\right]
  \;\leq\;\frac{\tau+1}{T}\,E_{b}^{\max}
  \;+\;\sqrt{2\bigl(\epsilon_{Z}+C_{b}\bigr)(\tau+1)},
  \label{eq:prop2_bound}
\end{equation}
where $C_{b}=\tfrac{1}{2}\max\bigl(\bar{e}_{b}^{2},\bar{E}_{b}^{2}\bigr)$ is a problem-dependent constant.
\end{proposition}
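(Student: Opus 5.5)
The plan is a standard Lyapunov-drift argument applied to the single energy queue $Z_b(t)$ and run up to the partial horizon $\tau$ rather than to $T$.

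\textbf{Step 1 (queue dominates cumulative excess).} Since $[x]^+\ge x$, the recursion~\eqref{eq:Z} gives $Z_b(t+1)\ge Z_b(t)+\bar{e}_b y_b(t)-\bar{E}_b$. Summing over $t=0,\dots,\tau$ with $Z_b(0)=0$ yields
\begin{equation*}
  \sum_{t=0}^{\tau}\bar{e}_b y_b(t)\;\le\;(\tau+1)\bar{E}_b+Z_b(\tau+1)
  \;=\;\frac{\tau+1}{T}E_b^{\max}+Z_b(\tau+1),
\end{equation*}
using $\bar{E}_b=E_b^{\max}/T$. Taking expectations, it suffices to show $\mathbb{E}[Z_b(\tau+1)]\le\sqrt{2(\epsilon_Z+C_b)(\tau+1)}$.

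\textbf{Step 2 (one-slot drift).} Squaring the recursion and using $([x]^+)^2\le x^2$ gives
\begin{equation*}
  \tfrac12 Z_b(t+1)^2-\tfrac12 Z_b(t)^2\;\le\;Z_b(t)\bigl(\bar{e}_b y_b(t)-\bar{E}_b\bigr)+\tfrac12\bigl(\bar{e}_b y_b(t)-\bar{E}_b\bigr)^2 .
\end{equation*}
Because $y_b(t)\in\{0,1\}$ and $0<\bar{E}_b\le\bar{e}_b$ (as $\eta\le1$), the increment lies in $[-\bar{E}_b,\bar{e}_b-\bar{E}_b]$. Its square is therefore at most $\max(\bar{e}_b^2,\bar{E}_b^2)$, so the last term is at most $C_b$. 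Taking expectations and invoking Assumption~\ref{ass:queue_weighted} on the cross term gives $\tfrac12\mathbb{E}[Z_b(t+1)^2]-\tfrac12\mathbb{E}[Z_b(t)^2]\le\epsilon_Z+C_b$.

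\textbf{Step 3 (telescope and Jensen).} Summing over $t=0,\dots,\tau$ with $Z_b(0)=0$ gives $\mathbb{E}[Z_b(\tau+1)^2]\le 2(\epsilon_Z+C_b)(\tau+1)$. Jensen's inequality, $\mathbb{E}[Z]\le\sqrt{\mathbb{E}[Z^2]}$, then completes the bound~\eqref{eq:prop2_bound}.

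The only step that needs care is Step 2, where the drift cross term must be controlled. This is exactly where Assumption~\ref{ass:queue_weighted} enters, and its uniformity in $t$ and $T$ is what makes the slack grow like $\sqrt{\tau+1}$ rather than linearly. A secondary point is to check that $C_b$ as defined does bound $\tfrac12$ of the squared increment, which follows from $\bar{E}_b\le\bar{e}_b$. Everything else is routine telescoping.
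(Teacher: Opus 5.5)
Your proof is correct and follows the paper's own argument: the same one-slot drift bound with the cross term controlled by Assumption~\ref{ass:queue_weighted}, telescoping from $Z_b(0)=0$, and Jensen's inequality. The only difference is cosmetic: you apply Jensen directly to $Z_b(\tau+1)$, whereas the paper first passes to the positive part of the cumulative excess and then applies Jensen to that. Your shortcut is harmless, and your remark that the $C_b$ bound needs $\bar{E}_b\le\bar{e}_b$ is unnecessary, since $|\bar{e}_b-\bar{E}_b|\le\max(\bar{e}_b,\bar{E}_b)$ holds for any nonnegative values.
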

\begin{IEEEproof}
Let $L_{b}(t)=\tfrac{1}{2}Z_{b}(t)^{2}$ and $a_{b}(t)\triangleq\bar{e}_{b}y_{b}(t)-\bar{E}_{b}$. Since $Z_{b}(t+1)=[Z_{b}(t)+a_{b}(t)]^{+}$ and $([z]^{+})^{2}\le z^{2}$, the one-step drift satisfies $L_{b}(t+1)-L_{b}(t)\le Z_{b}(t)a_{b}(t)+\tfrac{1}{2}a_{b}(t)^{2}$. As $y_{b}(t)\in\{0,1\}$ gives $a_{b}(t)\in\{-\bar{E}_{b},\,\bar{e}_{b}-\bar{E}_{b}\}$ and hence $\tfrac{1}{2}a_{b}(t)^{2}\le C_{b}$, taking expectations and applying Assumption~\ref{ass:queue_weighted} yields $\mathbb{E}[L_{b}(t+1)-L_{b}(t)]\le\epsilon_{Z}+C_{b}$. Summing from
$t=0$ to $\tau$ with $L_{b}(0)=0$,
\begin{equation}
  \mathbb{E}\bigl[L_{b}(\tau+1)\bigr]
  \;\leq\;\bigl(\epsilon_{Z}+C_{b}\bigr)(\tau+1).
  \label{eq:lyapunov_sum}
\end{equation}
Next, since $Z_{b}(t+1)\ge Z_{b}(t)+a_{b}(t)$ and $Z_{b}(t)\ge 0$, induction on~\eqref{eq:Z} with $Z_{b}(0)=0$ yields $Z_{b}(\tau+1)\ge[\Delta_{b}(\tau)]^{+}$, where $\Delta_{b}(\tau)\triangleq\sum_{t=0}^{\tau}\bar{e}_{b}y_{b}(t) -(\tau+1)\bar{E}_{b}$. Squaring, taking expectations, and using $\mathbb{E}[Z_{b}(\tau+1)^{2}]=2\,\mathbb{E}[L_{b}(\tau+1)]$ with~\eqref{eq:lyapunov_sum} gives $\mathbb{E}[([\Delta_{b}(\tau)]^{+})^{2}]\le
2(\epsilon_{Z}+C_{b})(\tau+1)$, and Jensen's inequality then yields $\mathbb{E}[[\Delta_{b}(\tau)]^{+}]\le \sqrt{2(\epsilon_{Z}+C_{b})(\tau+1)}$. Finally, since $\Delta_{b}(\tau)\le[\Delta_{b}(\tau)]^{+}$, we have $\mathbb{E}[\sum_{t=0}^{\tau}\bar{e}_{b}y_{b}(t)]\le
(\tau+1)\bar{E}_{b}+\mathbb{E}[[\Delta_{b}(\tau)]^{+}]$, and substituting $\bar{E}_{b}=E_{b}^{\max}/T$ yields~\eqref{eq:prop2_bound}.
\end{IEEEproof}

Proposition~\ref{prop:intra_episode} establishes a \emph{uniform pacing property}: at any partial horizon $\tau$, the expected energy consumption is bounded by the proportional allocation $\frac{\tau+1}{T}E_{b}^{\max}$ plus a slack term of order $\mathcal{O}(\sqrt{\tau+1})$. Since the slack grows only sublinearly in $\tau$ while the allocation grows linearly, the guarantee becomes relatively tighter as the horizon progresses, and it tightens further as $\epsilon_{Z}$ decreases with training.

This property does not extend to DDPP, which lacks the episodic learning mechanism required by Assumption~\ref{ass:queue_weighted}. Consider a single BS ($B=1$) with $\bar{e}_{b}=1$ and $\bar{E}_{b}=\eta$, so that $E_{b}^{\max}=\eta T$ with $\eta\in(0,1)$. Since $Z_{b}(0)=0$, the energy penalty in~\eqref{eq:dpp_reward} vanishes at $t=0$ and DDPP activates the BS. Thereafter, as long as the queue-weighted throughput gain $\max_{u}Q_{u}(t)\hat{r}_{u,b}(t)$ outweighs the penalty $Z_{b}(t)\bar{e}_{b}$, which holds under typical channel conditions while $Z_{b}(t)$ remains moderate, the greedy rule keeps the BS active at every slot and the queue grows linearly as $Z_{b}(t)=(1-\eta)t$. The left-hand side of~\eqref{eq:ass5} then becomes $(1-\eta)^{2}t$, and evaluating it at the slot $t=\eta T-1$ at which the budget is exhausted gives $\sup_{0\le t<T}\mathbb{E}_{\rm DDPP}[\,\cdot\,]\ge\eta(1-\eta)^{2}T-(1-\eta)^{2}$. No horizon-independent constant $\epsilon_{Z}$ can therefore satisfy Assumption~\ref{ass:queue_weighted} under DDPP. Admitting a horizon-dependent bound $\epsilon_{Z}=\Theta(T)$ would not help either, since the slack in~\eqref{eq:prop2_bound} would then become $\Theta(T)$ at $\tau=\Theta(T)$ and the guarantee would impose no effective constraint. Both observations match Fig.~\ref{fig:energy_constraint}, where DDPP consumes its entire budget by $t\approx\eta T$ and remains inactive thereafter. Uniform pacing is thus a consequence of HeLyMARL's episodic training rather than of the DPP reward structure alone.

\subsection{Timescale Separation}
\label{subsec:joint}
The preceding results characterize constraint satisfaction at three levels. At inference, budget-aware action masking guarantees per-trajectory feasibility for every variant (Proposition~\ref{prop:feasibility}), so the budgets are never violated regardless of policy quality. During training, the two constraint-handling mechanisms operate on \emph{different timescales}.

Constrained HAPPO regulates the budgets \emph{across} episodes: the dual variables accumulate constraint pressure from past episodes, remain fixed within an episode, and yield an episode-averaged violation decaying at rate $\mathcal{O}(1/\sqrt{K})$ (Theorem~\ref{thm:dual_conv}). Two limitations follow. First, the guarantee bounds only the \emph{average of the signed} violations, so it does not preclude a policy that overspends during part of the horizon and compensates over the remainder. Second, since the multipliers are constant within an episode, they provide a static pressure that cannot respond to instantaneous budget depletion in the current one. No amount of additional training removes this, as the limitation is structural rather than a matter of convergence.

HeLyMARL, by contrast, regulates the budgets \emph{within} each episode. The queues $Z_{b}(t)$ and $G_{u}(t)$ evolve at every slot and grow as the budget is consumed, imposing progressively stronger penalties as resources become scarce. This intra-episode adaptivity is what yields the pacing bound at \emph{every} partial horizon in Proposition~\ref{prop:intra_episode}, a guarantee with no counterpart under Lagrangian relaxation. Across episodes, the policy is further refined to act proactively while the queues are still near zero, the mechanism that DDPP lacks.

The separation is the analytical counterpart of Fig.~\ref{fig:energy_constraint}, where both Lagrangian-based variants meet the terminal energy budget yet sustain an ON-ratio well above $\eta$ for most of the horizon before abrupt depletion. These trajectories are consistent with Theorem~\ref{thm:dual_conv}, and longer training would therefore not remedy them.

\section{Experiments}
\label{sec:experiments}

We present simulation results to validate the theoretical analysis and demonstrate the effectiveness of HeLyMARL under finite-horizon energy and handover constraints, comparing it against a conventional heuristic, a Lyapunov-based benchmark, and constrained MARL benchmarks.

\begin{table}[t]
    \vspace{0.03in}
    \centering
    \caption{Training hyperparameters of HeLyMARL}
    \label{tab:helymarl_params}
    \renewcommand{\arraystretch}{1.0}
\setlength{\tabcolsep}{4pt}
\footnotesize
    \resizebox{0.95\columnwidth}{!}{%
    \begin{tabular}{ll|ll}
    \toprule
    \textbf{Parameter} & \textbf{Value} & \textbf{Parameter} & \textbf{Value} \\
    \midrule
    Actor learning rate & $3 \times 10^{-4}$ & Discount factor ($\gamma$) & $0.99$ \\
    Critic learning rate & $1 \times 10^{-3}$ & GAE parameter ($\lambda$) & $0.95$ \\
    Entropy coefficient ($\beta_{\text{ent}}$) & $0.05$ & Clip parameter ($\varepsilon$) & $0.2$ \\
    Minibatch size & $256$ & Update epochs & $4$ \\
    Hidden size & $128$ & Rollout length & $128$ \\
    Training episodes ($K$) & $10$ & Steps per episode ($T$) & $10^{4}$ \\
    Candidate set size ($N_c$) & $5$ & Penalty parameter ($V$) & $5$ \\
    Optimizer & Adam & Value normalization & Enabled \\
    \bottomrule
    \end{tabular}%
    }
\end{table}

\subsection{Simulation Setup}
\label{subsec:setup}

We consider a small-cell mmWave network operating at 28~GHz within a $100$~m~$\times$~$100$~m area, with a system bandwidth of 500~MHz and a BS transmit power of 20~dBm. Users are initially placed uniformly at random and move according to a Gaussian random walk mobility model. BSs are placed symmetrically over the coverage area: an equilateral triangle for $B=3$, a centered square for $B=5$, a centered hexagon for $B=7$, and a uniform $3\times3$ grid for $B=9$. The resulting inter-site distance is approximately $35$ to $50$~m, corresponding to a dense small-cell deployment. Unless otherwise stated, all experiments use the default setting of $B=3$, $U=20$, $\eta=0.6$, and $\kappa=0.03$ over a horizon of $T=10^{4}$ slots, and experiments that vary these parameters state the corresponding values. The hyperparameters are summarized in Table~\ref{tab:helymarl_params}, including the DPP penalty parameter $V=5$ and the candidate set size $N_{c}=5$. HeLyMARL, Jensen-HAPPO, and PF-HAPPO share these hyperparameters and are trained under an identical protocol of $K=10$ episodes, with the environment and all queue and budget states reset at each episode boundary. Results are reported as the mean and standard deviation over five evaluation seeds, with each method trained using three independent training seeds.

Throughput is measured as the average aggregate rate over the horizon, and fairness by Jain's fairness index~(JFI)~\cite{jain1984}. Since all methods satisfy the hard budgets by construction under budget-aware action masking, the remaining distinction lies in how each budget is consumed over time. We therefore track the ON-ratio, the average fraction of active BSs, $\text{ON-ratio}\triangleq\frac{1}{BT}\sum_{b=1}^{B}\sum_{t=0}^{T-1}y_{b}(t)$ whose trajectory reveals whether activation is paced at the target rate $\eta$ or exhausts the budget prematurely, and the HO-ratio, the per-user average handover frequency, $\text{HO-ratio}\triangleq\frac{1}{U(T-1)}\sum_{u=1}^{U}\sum_{t=1}^{T-1}h_{u}(t)$, which quantifies how much of the allowance is actually consumed.

In addition to the constrained MARL variants of Section~\ref{subsec:constrained_happo}, Jensen-HAPPO and PF-HAPPO, the following benchmarks are considered. \textbf{MaxSNR} associates each user with the BS providing the maximum SNR and schedules the requesting user with the highest instantaneous channel quality, remaining inactive if no requests are received. It requires no training and has no awareness of the budgets or of user fairness, serving as a reference for the gain from learning-based control. \textbf{DDPP} applies the DPP framework~\cite{neely2010stochastic} in a decentralized per-slot fashion: each user selects the BS with the highest weight $w_{u,b}(t) = Q_u(t)\hat{r}_{u,b}(t) - Z_b(t)\bar{e}_b - G_u(t)\hat{h}_{u,b}(t)$ and requests it if $w_{u,b}(t) > 0$, and each BS schedules the highest-weight requester. It optimizes only the instantaneous objective without trajectory-level learning. These benchmarks span the design space available for this problem: channel-greedy heuristics, per-slot Lyapunov control, and Lagrangian-based constrained MARL, which is the standard treatment of constraints in MARL~\cite{gu2022macpo}. Trust-region alternatives such as CPO~\cite{achiam2017cpo} are excluded because they are designed for continuous control (Section~\ref{subsec:constrained_happo}). Since no learning-based method enforces the budgets on individual trajectories, all methods employ the same budget-aware action masking at inference~(Remark~\ref{rem:masking}), so the finite-horizon constraints hold on every evaluated trajectory.

\begin{figure}[t]
    \centering
    \includegraphics[width=0.85\linewidth]{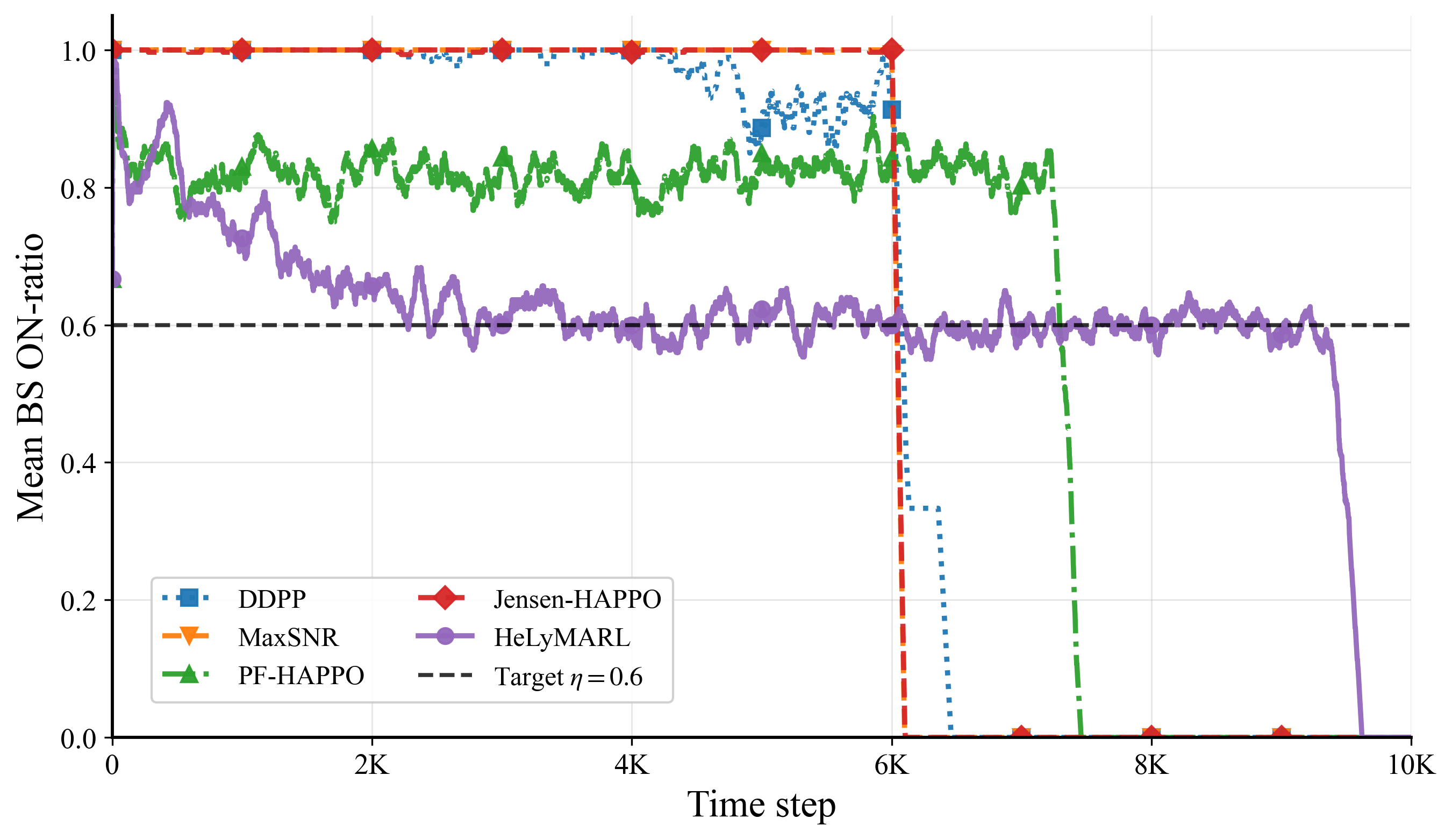}
    \caption{Average BS ON-ratio trajectories of all methods over the evaluation horizon under the default setting.}
    \label{fig:energy_constraint}
\end{figure}

\begin{figure}[t]
    \centering
    \begin{subfigure}{0.48\linewidth}
        \centering
        \includegraphics[width=\linewidth]{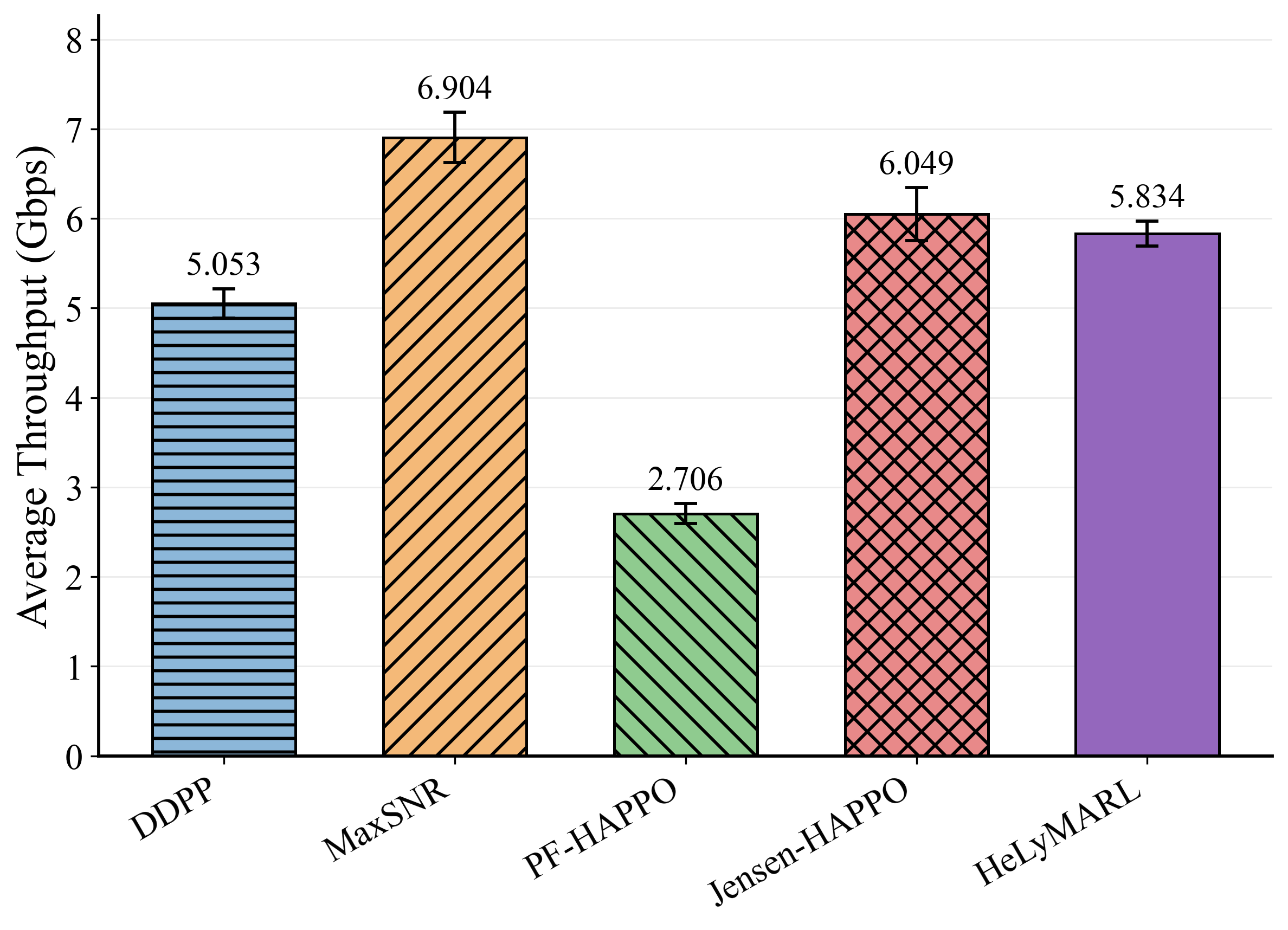}
        \caption{Average throughput}
        \label{fig:throughput}
    \end{subfigure}
    \hfill
    \begin{subfigure}{0.48\linewidth}
        \centering
        \includegraphics[width=\linewidth]{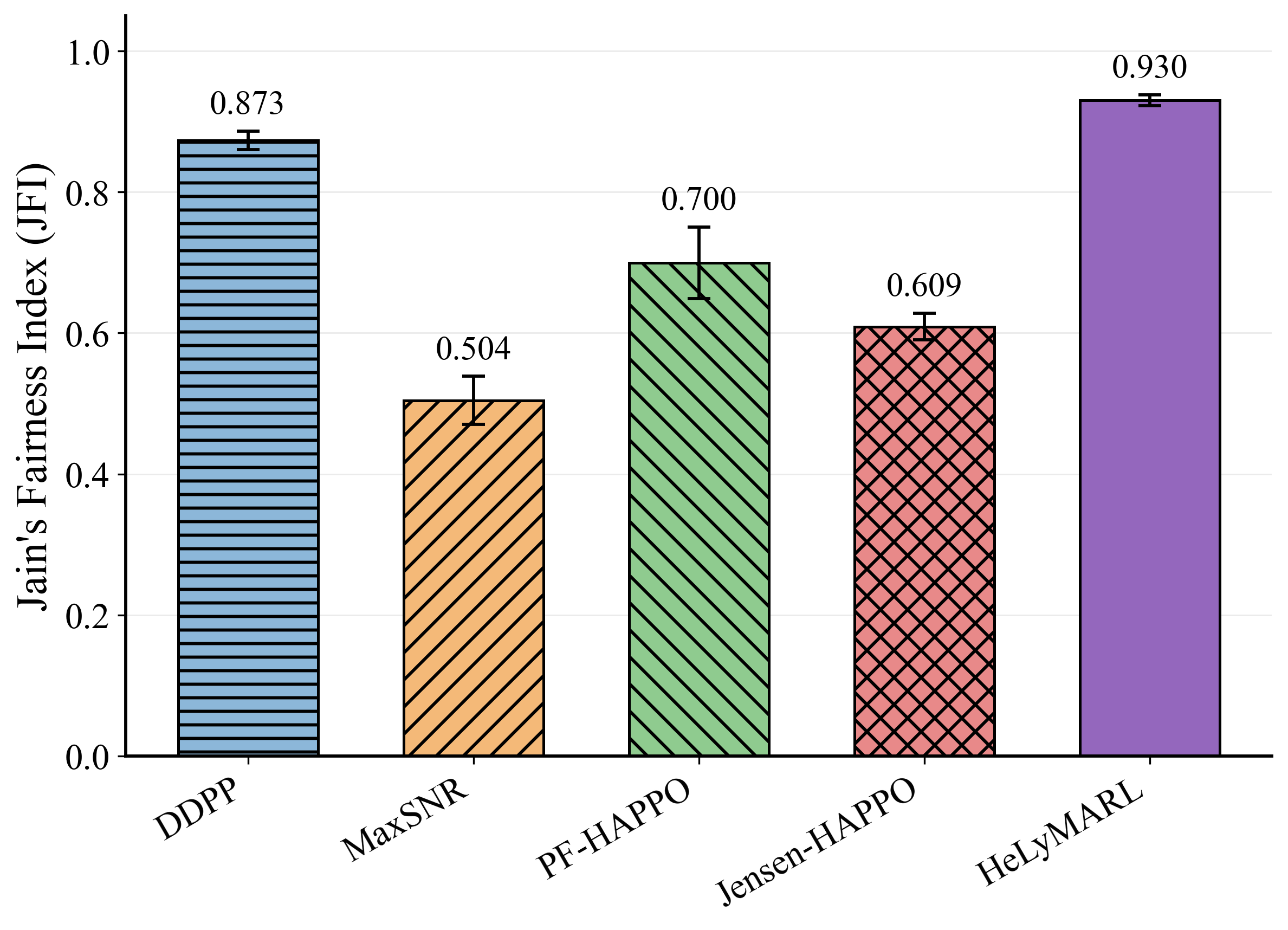}
        \caption{Average JFI}
        \label{fig:fairness}
    \end{subfigure}
    \caption{Performance comparison in terms of average throughput and JFI under the default setting.}
    \label{fig:performance_comparison}
\end{figure}

\subsection{Throughput, Fairness, and Service Continuity}
\label{subsubsec:throughput_fairness}

Fig.~\ref{fig:energy_constraint} shows the ON-ratio trajectories of all methods and Fig.~\ref{fig:performance_comparison} the resulting throughput and fairness. Since all methods employ budget-aware action masking at inference, the constraints hold on every evaluated trajectory; the difference lies in how each method utilizes the budget over the horizon.

MaxSNR and DDPP activate most BSs early on, with an ON-ratio close to one, since their on/off decisions are \emph{passive}: a BS remains active unless no user requests it. Both exhaust the budget around $6$K and are forced into an activation outage for the rest of the horizon. Their aggregate performance nonetheless differs sharply. MaxSNR achieves the highest throughput at the lowest fairness, as greedy channel-based scheduling concentrates service on well-positioned users, whereas DDPP trades roughly a quarter of that throughput for a fairness index second only to HeLyMARL, since its weight incorporates the fairness queue $Q_u(t)$. The DPP framework thus provides an effective fairness mechanism on its own, but without trajectory-level learning it cannot pace the budget: DDPP begins to throttle activation only around $t\approx4.5$K, once $Z_b(t)$ has grown large enough for the instantaneous penalty to bind, by which point most of the budget is spent.

The two constrained benchmarks share the same Lagrangian treatment and differ only in their base reward, yet their energy profiles are opposite. Jensen-HAPPO, whose reward~\eqref{eq:jensen_reward} favors users with the highest instantaneous rates, activates aggressively and exhausts the budget as early as MaxSNR, ending with a throughput close to it but a fairness index barely above it. PF-HAPPO, whose reward~\eqref{eq:pf_reward} prioritizes historically underserved users, instead holds an ON-ratio around $0.8$ until $7.5$K, consistently above the target $\eta=0.6$, and pays for the sustained activation with by far the lowest throughput. That the same constraint mechanism produces opposite profiles confirms Theorem~\ref{thm:dual_conv}: the episode-level multiplier regulates only the inter-episode average, leaving the within-episode pattern to the base reward.

HeLyMARL resolves this trade-off. It exhibits an initial transient with a higher ON-ratio before settling at $\eta=0.6$, reflecting the intra-episode queue dynamics rather than any adaptation during evaluation: since $Z_{b}(0)=0$, the energy penalty in~\eqref{eq:dpp_reward} is initially inactive, and the policy throttles activation as the queue accumulates. It then fluctuates around $\eta=0.6$ and sustains service until $t\approx9.5$K, exhausting the budget almost exactly on schedule as formalized in Proposition~\ref{prop:intra_episode}. The result is the highest fairness index of all methods, obtained by giving up roughly $15\%$ of the throughput of the channel-greedy schedulers for a fairness gain of $0.32$ to $0.43$ and uninterrupted service. HeLyMARL is the only design that keeps all three dimensions high at once.

\subsection{Handover Constraint Satisfaction}
\label{subsubsec:handover}

Fig.~\ref{fig:handover_constraint}(a) presents the episode-wise handover constraint gap $\bar{h}^{(k)}-\kappa$ under $\kappa\in\{0.01,0.02,0.03\}$, where $\bar{h}^{(k)}$ denotes the HO-ratio on the trajectory of training episode $k$, averaged over three training seeds. The gap is large at the outset, while the policy still explores association decisions, and decreases for all $\kappa$ as training proceeds. For $\kappa=0.03$ it turns negative within the first few episodes; for $\kappa=0.02$ it approaches the boundary and fluctuates around it. For $\kappa=0.01$, which permits only one handover per $100$ slots per user, a positive residual gap persists after ten episodes. In this regime the budget is \emph{near-binding}: it lies close to the switching level that the fairness and channel-adaptation objectives themselves demand, so the policy cannot reduce handovers further without sacrificing them. Across all three budgets, the queue $G_u(t)$ drives the policy toward a \emph{budget-dependent} operating point rather than simply minimizing handovers.

Fig.~\ref{fig:handover_constraint}(b) reports the per-user handover ratio during evaluation, with vertical bars showing the minimum and maximum across users. The averages are approximately $0.010$, $0.016$, and $0.017$, all below the corresponding budgets, and the spread tracks how binding the constraint is. For $\kappa\le0.02$ the maximum sits at the budget, as the most mobile users are capped by action masking, and the range is narrowest at $\kappa=0.01$ where nearly all users exhaust their allowance. At $\kappa=0.03$ no user reaches the limit and the range widens, revealing the natural per-user switching demand: users that frequently cross coverage boundaries consume most of their allowance while stably associated users use only a fraction of theirs. This also explains why the average rises sharply from $\kappa=0.01$ to $\kappa=0.02$ but saturates thereafter. A tight budget genuinely forces more conservative switching, while a loose one leaves the policy's natural switching level untouched.

\begin{figure}[t]
    \vspace{0.02in}
    \centering

    \begin{subfigure}[t]{0.47\linewidth}
        \centering
        \vspace{0pt}
        \includegraphics[
            height=1.30in,
            keepaspectratio
        ]{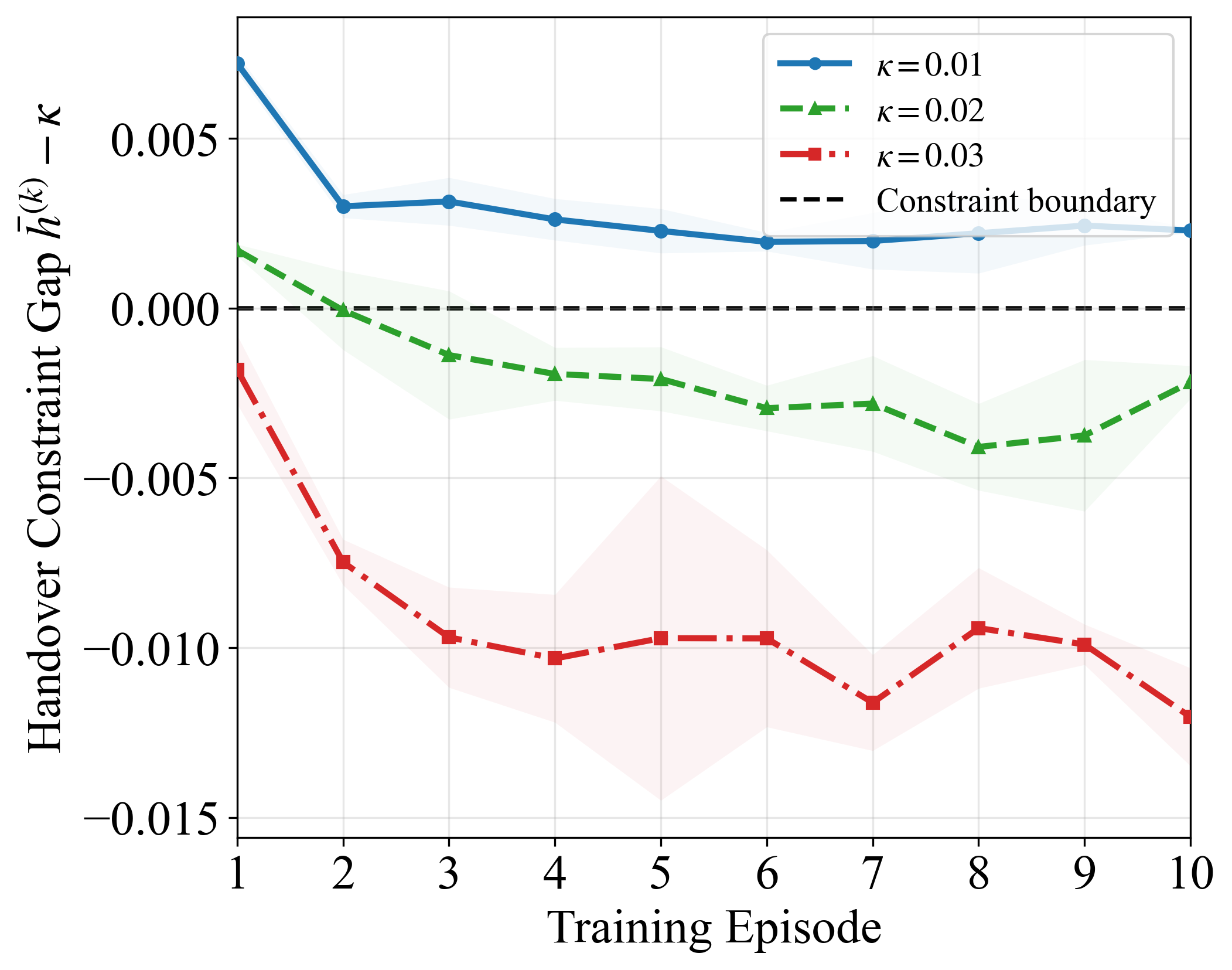}
        \caption{Training}
        \label{fig:Training}
    \end{subfigure}
    \hfill
    \begin{subfigure}[t]{0.47\linewidth}
        \centering
        \vspace{0pt}
        \includegraphics[
            height=1.22in,
            keepaspectratio
        ]{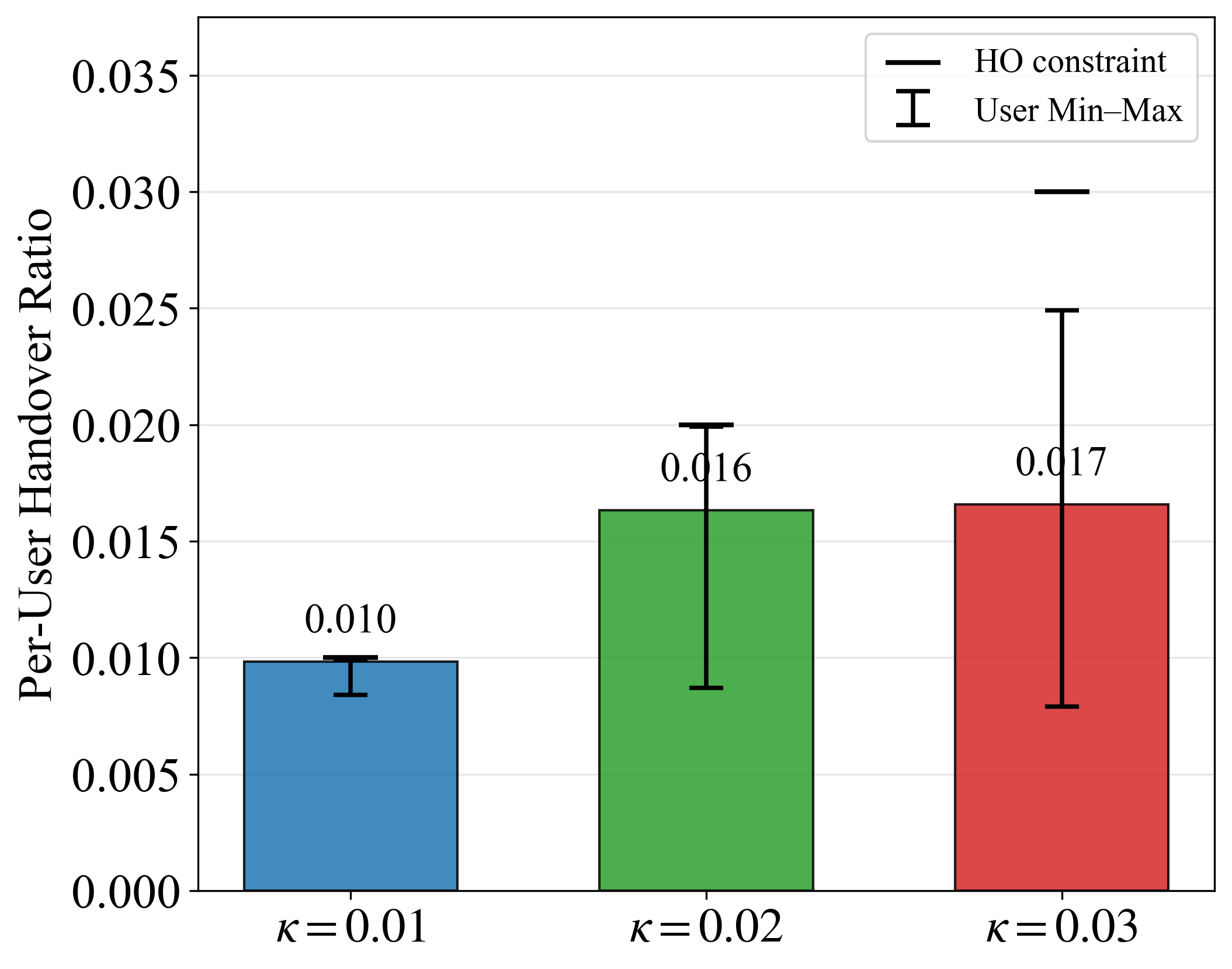} 
        \caption{Evaluation}
        \label{fig:Evaluation}
    \end{subfigure}
   \caption{Handover constraint behavior of HeLyMARL under different handover budgets $\kappa\in\{0.01,0.02,0.03\}$, with all other parameters at the default setting.}
    \label{fig:handover_constraint}
\end{figure}

\begin{figure}[t]
    \centering
    \includegraphics[width=0.85\linewidth]{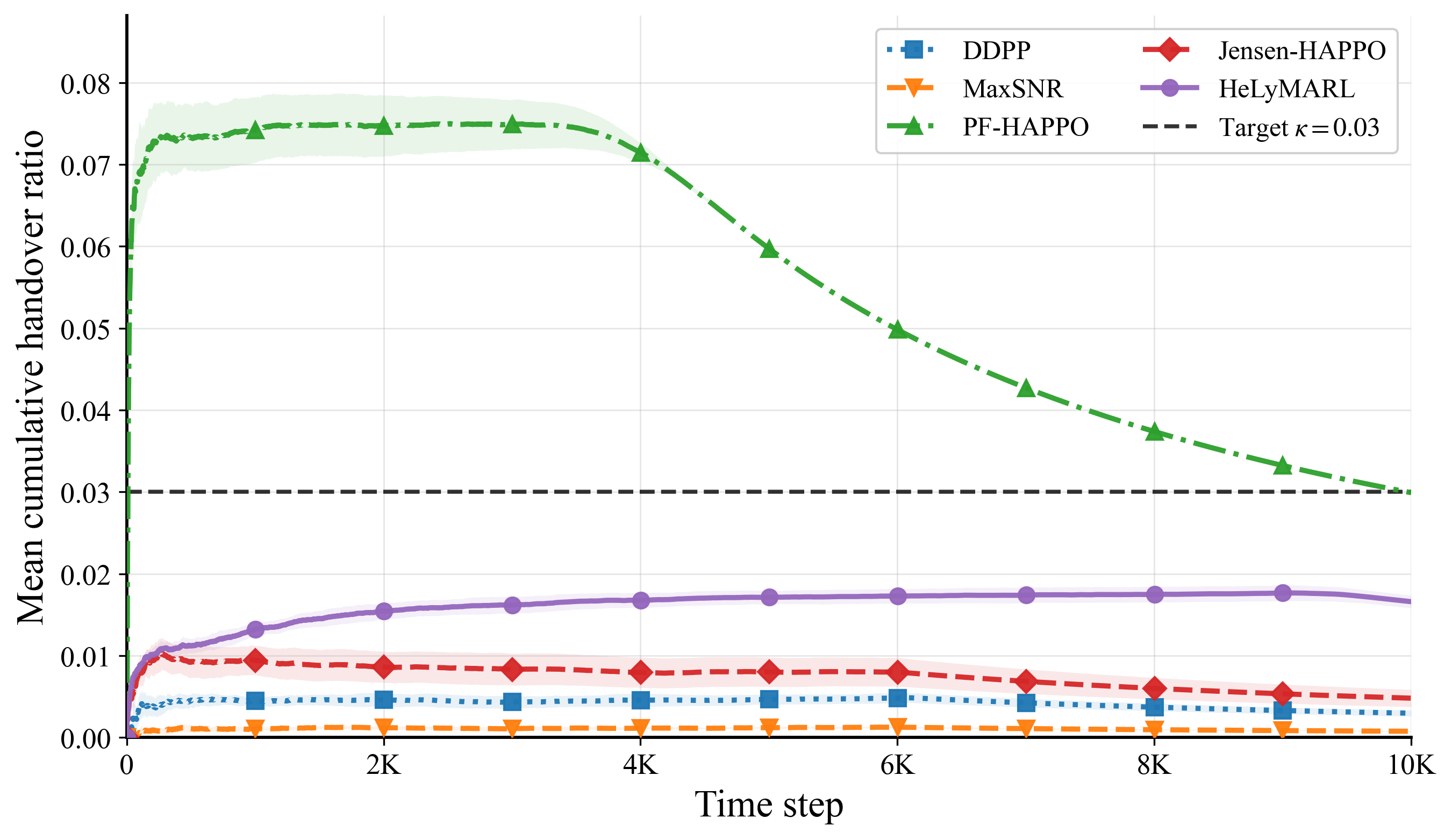}
    \caption{Mean cumulative handover ratio of all benchmark methods over the evaluation horizon under the default setting.} 
    \label{fig:handover_trajectory}
\end{figure}

\begin{remark}[Training-Evaluation Asymmetry in Handover Constraint]
\label{rem:ho_asymmetry}
The achieved handover ratio during evaluation is always at or below $\kappa$. This reflects the asymmetry inherent in budget-aware action masking~(Remark~\ref{rem:masking}). During training, masking is withheld and $G_{u}(t)$ drives the policy to satisfy the constraint \emph{on average} across users, so some users may exceed $\kappa$ while others remain below. During evaluation, masking enforces a hard per-user bound, capping the excess users at $\kappa$ while the rest remain below it, which pulls the average strictly below the budget.
\end{remark}

Fig.~\ref{fig:handover_trajectory} compares the mean cumulative handover ratio of all methods under the default setting. MaxSNR provides a useful reference: selecting on instantaneous channel quality alone, and with the strongest BS almost always the nearest one under mmWave path loss, it reassigns a user only when that user crosses a cell boundary. Its curve therefore measures the switching that mobility alone induces, and it lies an order of magnitude below every other method. The handovers of the remaining policies are consequently driven not by channel variation but by {\em load balancing}. Two properties of this setting make fairness and switching inseparable: each BS serves one user per slot, so an underserved user can be relieved only by moving it to a less contended BS, and the overlapping coverage of a dense deployment makes such a move possible at a modest rate loss. The methods differ in how strongly they pursue fairness and in what price they attach to the resulting switches.

Jensen-HAPPO switches nearly ten times as often as MaxSNR, yet gains little from it. Its sum-log reward makes spreading service across users worth more than concentrating it, so the learned policy distributes users over the BSs. The reward carries no record of how much service a user has already received, however, so a user crowded out earlier gains no priority later and the cost of a congested slot is never recovered. Its fairness index ($0.609$) improves on MaxSNR only marginally. PF-HAPPO pursues fairness explicitly and exposes the second factor. Its handover cost is priced by the dual variable $\mu_H^u$, which is fixed within an episode and cannot respond as the budget is drawn down. The cumulative ratio rises to approximately $0.075$ within the first $4$K slots, more than twice the budget; individual users then exhaust their allowances, masking prevents further switching, and the ratio decays to $\kappa$ by the terminal time. PF-HAPPO thus meets the horizon-level constraint while violating it over most of the horizon, the handover-side counterpart of the energy behavior in Fig.~\ref{fig:energy_constraint} and a direct illustration of Theorem~\ref{thm:dual_conv}.

DDPP prices switching on the right timescale. The queue $Q_u(t)$ records how much service a user has been denied, so a user crowded out of a busy BS accumulates a backlog that raises its rank at every BS and is served sooner; serving it passes the priority to the next user. The penalty $G_u(t)\hat{h}_{u,b}(t)$ meanwhile grows with the budget already consumed, so the price of a switch rises exactly when switching becomes scarce. DDPP therefore switches only about half as often as Jensen-HAPPO and still attains a markedly higher fairness index ($0.873$): what determines the value of a handover is not how often it occurs, but whether it targets an imbalance that has already materialized. HeLyMARL ranks candidates by the same score, so the two agree on which switches are worth making and differ in how much of the budget they spend. DDPP weighs each switch against the instantaneous penalty alone and suppresses switching from the first slots, leaving most of its allowance unused, whereas HeLyMARL learns to distribute the budget over the horizon. Its cumulative ratio settles near $0.017$, below $\kappa$ throughout in line with Remark~\ref{rem:ho_asymmetry}, and the additional targeted switches lift its fairness to the highest value of $0.930$. The same per-slot greediness that makes DDPP overspend its energy budget early makes it underspend its handover budget, and only trajectory-level learning distributes both correctly.

\begin{table}[t]
\centering
\caption{Performance of HeLyMARL under varying handover budgets $\kappa$ for different BS configurations.}
\label{tab:kappa_sensitivity}
\renewcommand{\arraystretch}{0.75}
\begin{tabular}{cc cccc}
\toprule
\textbf{BS} & \boldmath$\kappa$
& \textbf{Throughput (Gbps)}
& \textbf{JFI}
& \textbf{ON-ratio}
& \textbf{HO-ratio} \\
\midrule
\multirow{3}{*}{$B=5$}
  & $0.01$ & $5.1$ & $0.53$ & $0.51$ & $0.010$ \\
  & $0.02$ & $7.7$ & $0.74$ & $0.56$ & $0.020$ \\
  & $0.03$ & $8.2$ & $0.90$ & $0.56$ & $0.027$ \\
\midrule
\multirow{3}{*}{$B=7$}
  & $0.02$ & $7.0$ & $0.65$ & $0.31$ & $0.020$ \\
  & $0.04$ & $7.3$ & $0.81$ & $0.41$ & $0.040$ \\
  & $0.06$ & $9.4$ & $0.92$ & $0.59$ & $0.060$ \\
\midrule
\multirow{3}{*}{$B=9$}
  & $0.03$ & $8.2$ & $0.65$ & $0.31$ & $0.030$ \\
  & $0.07$ & $9.1$ & $0.85$ & $0.43$ & $0.070$ \\
  & $0.11$ & $10.1$ & $0.92$ & $0.58$ & $0.110$ \\
\bottomrule
\end{tabular}
\end{table}

\begin{figure}[t]
    \centering

    \begin{subfigure}[t]{0.23\textwidth}
        \centering
        \includegraphics[width=\linewidth]{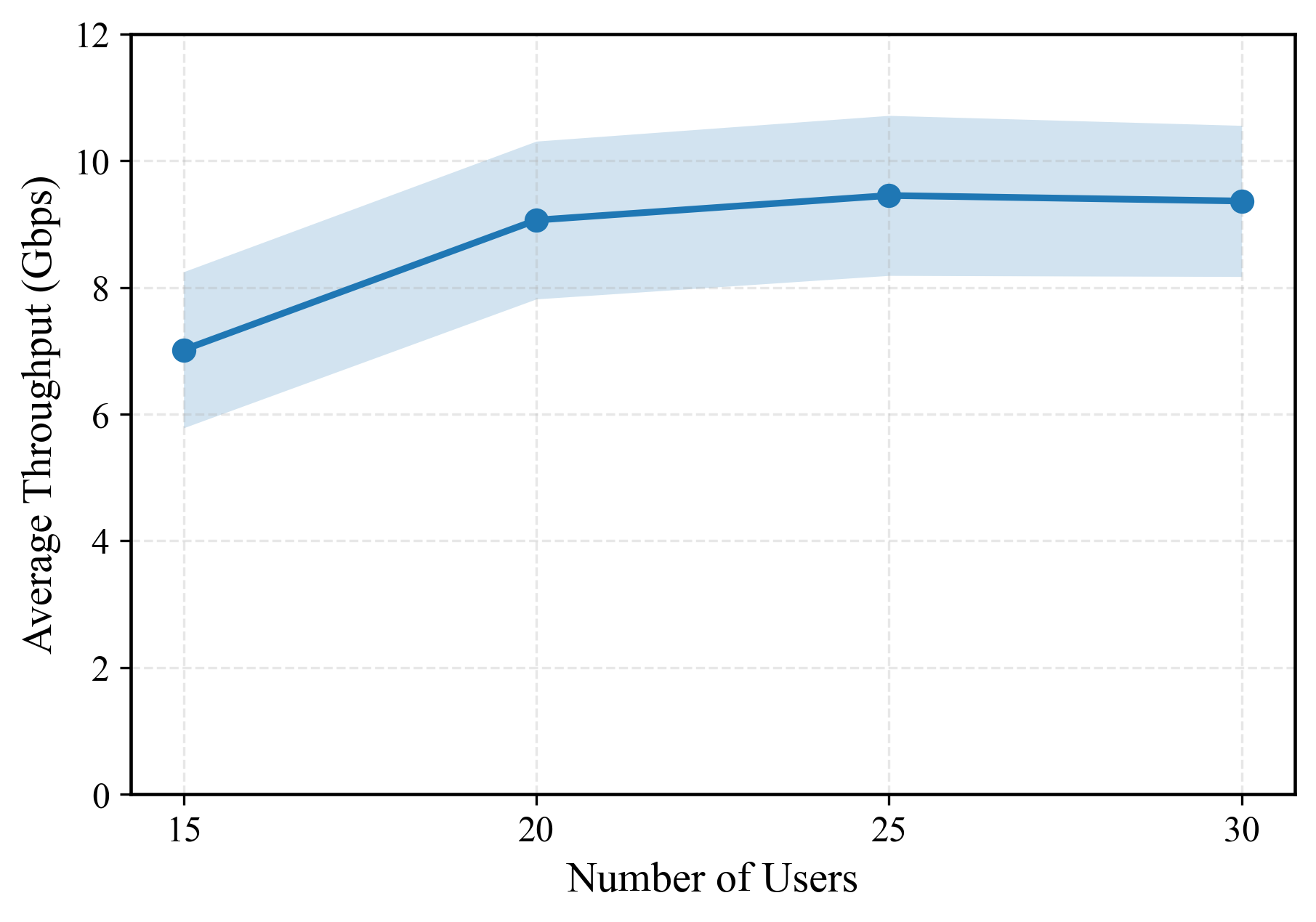}
        \caption{Throughput}
        \label{fig:exp2_throughput}
    \end{subfigure}
    \hfill
    \begin{subfigure}[t]{0.23\textwidth}
        \centering
        \includegraphics[width=\linewidth]{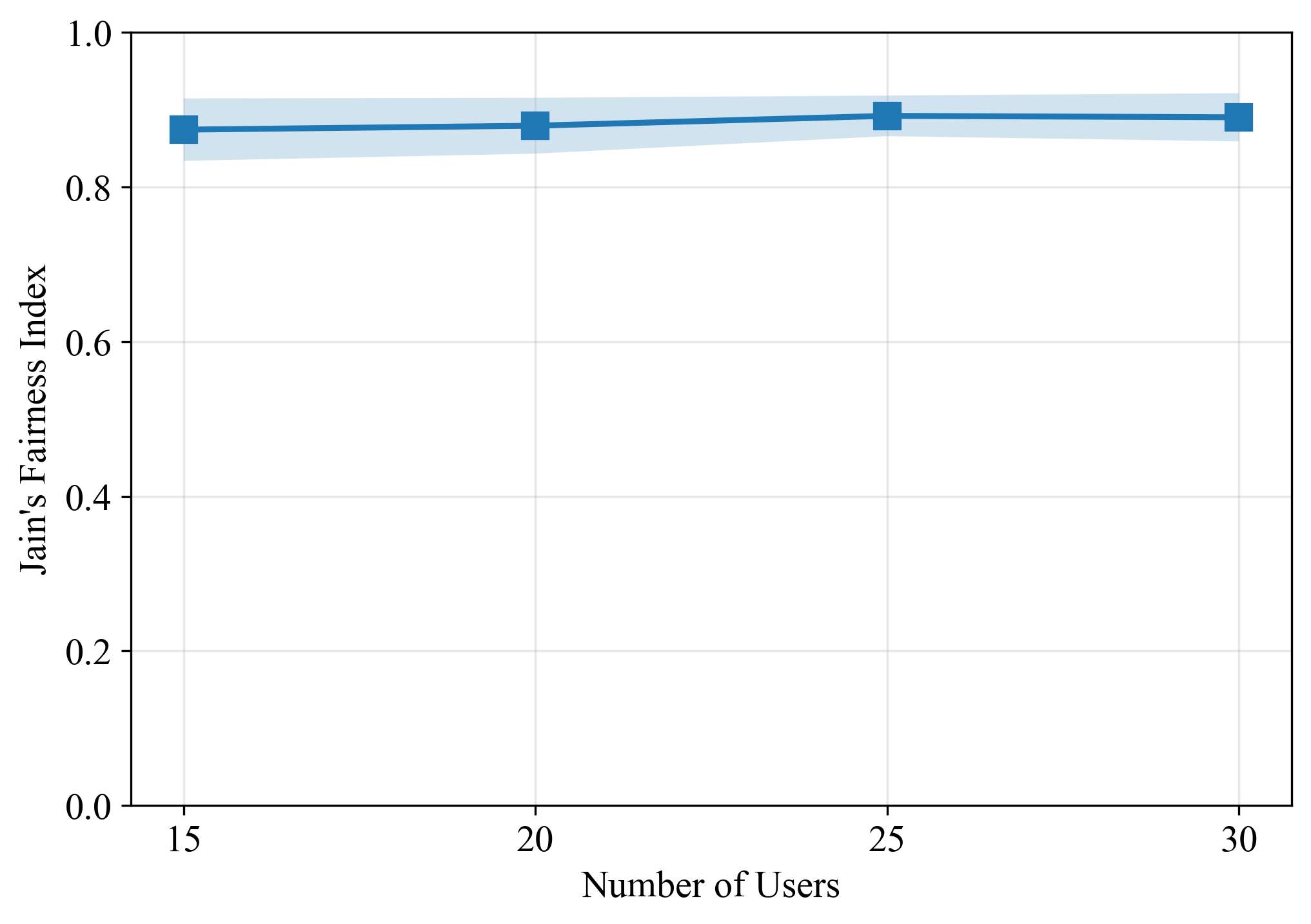}
        \caption{JFI}
        \label{fig:exp2_fairness}
    \end{subfigure}
    \caption{Scalability evaluation of HeLyMARL under varying numbers of users ($B=9$, trained at $U=20$ and evaluated without retraining).}
    \label{fig:scalability}
\end{figure}

\subsection{Network Scalability}
\label{subsubsec:scalability}
We examine scalability along two axes: the number of BSs, which requires
retraining, and the number of users, which does not.

Table~\ref{tab:kappa_sensitivity} reports HeLyMARL trained and evaluated at $B\in\{5,7,9\}$ under several handover budgets. The budgets are not comparable across deployments, since denser networks create more coverage boundaries for mobile users to cross: the switching level the policy adopts when the budget is not binding is approximately $0.03$, $0.06$, and $0.11$ for $B=5$, $7$, and $9$, growing superlinearly with $B$. Each $\kappa$ should therefore be read relative to this demand. Two trends follow. First, the achieved HO-ratio matches $\kappa$ in every configuration, so the budget is respected exactly as prescribed regardless of network size. Second, both the ON-ratio and the aggregate performance improve as $\kappa$ approaches the natural demand. At the tightest budget of each deployment, users exhaust their allowances early and are locked to their serving BSs, leaving the resulting associations non-uniform, so BSs that receive few requests remain inactive and the ON-ratio stays well below $\eta$. As $\kappa$ increases, users regain the freedom to redistribute, the ON-ratio recovers close to $\eta$, and throughput and fairness rise accordingly. The handover budget therefore governs not only switching but also how effectively the energy budget can be used, an instance of the inter-constraint coupling identified in Section~\ref{sec:introduction}.

Fig.~\ref{fig:scalability} turns to the user axis. A policy trained at $U=20$ is evaluated on populations from $U=15$ to $U=30$ in the $B=9$ deployment without retraining. Such zero-shot transfer is possible by construction: each BS acts on a candidate set of fixed size $N_c=5$
(Section~\ref{subsec:agent_design}), so its observation and action dimensions do not depend on $U$, and the shared user policy applies to any number of users. Throughput rises with $U$ and saturates beyond $U=25$, reflecting the capacity limit imposed by the number of scheduling slots rather than any degradation of the policy: once enough users are present to fill the active BSs, adding more cannot increase the aggregate rate. Fairness is essentially unchanged across the range, so the policy preserves its throughput-fairness balance under a doubling of the user population.

\subsection{Ablation Study: Sequential Group Update 
and Handover Virtual Queue}
\label{subsec:ablation}

We isolate the two design components that are specific to HeLyMARL: the sequential group update and the handover virtual queue. The first is examined against \emph{HeLyMARL-M}, a variant that retains the unified DPP reward~\eqref{eq:dpp_reward}, the states and observations, the candidate construction, and the action masking of HeLyMARL, but replaces the sequential group HAPPO update with simultaneous MAPPO updates. The second is examined against LyMARL~\cite{ko2026lymarl}, which is subject to the same handover budget through action masking at evaluation but does
not model the handover constraint during training. All variants are trained under identical protocols, so any difference is attributable to the component under test.

\begin{table}[t]
\centering
\caption{Effect of the sequential group update: HeLyMARL (H) versus HeLyMARL-M (M) at $\kappa=0.015$ for two deployment sizes.}
\label{tab:optimizer_ablation}
\renewcommand{\arraystretch}{1.15}
\setlength{\tabcolsep}{3pt}
\footnotesize
\begin{tabular}{lcccc}
\toprule
& \multicolumn{2}{c}{$B=3$} & \multicolumn{2}{c}{$B=5$} \\
\cmidrule(lr){2-3}\cmidrule(lr){4-5}
\textbf{Metric} & \textbf{H} & \textbf{M} & \textbf{H} & \textbf{M} \\
\midrule
Thr.\ (Gbps) & ${6.12}{\scriptstyle\pm0.11}$
             & $5.51{\scriptstyle\pm0.36}$
             & ${7.22}{\scriptstyle\pm0.19}$
             & $6.67{\scriptstyle\pm0.78}$ \\
JFI          & ${0.933}{\scriptstyle\pm0.006}$
             & $0.865{\scriptstyle\pm0.044}$
             & ${0.835}{\scriptstyle\pm0.011}$
             & $0.690{\scriptstyle\pm0.070}$ \\
\bottomrule
\end{tabular}
\end{table}

\begin{table}[t] 
\centering 
\caption{Effect of the handover virtual queue: HeLyMARL versus LyMARL under a slack ($\kappa=0.03$) and a binding ($\kappa=0.015$) handover budget, with other parameters at default.} 
\label{tab:helymarl_lymarl_comparison} 
\renewcommand{\arraystretch}{1.15} 
\setlength{\tabcolsep}{3pt} 
\footnotesize \begin{tabular}{lcccc} 
\toprule & \multicolumn{2}{c}{$\kappa=0.03$} & \multicolumn{2}{c}{$\kappa=0.015$} \\ 
\cmidrule(lr){2-3}\cmidrule(lr){4-5} 
\textbf{Metric} & \textbf{HeLyMARL} & \textbf{LyMARL} & \textbf{HeLyMARL} & \textbf{LyMARL} \\ 
\midrule 
Thr.\ (Gbps)    & $5.83{\scriptstyle\pm0.14}$ 
                & ${5.93}{\scriptstyle\pm0.03}$ 
                & ${6.12}{\scriptstyle\pm0.11}$ 
                & $5.66{\scriptstyle\pm0.08}$ \\ 
JFI         & ${0.930}{\scriptstyle\pm0.008}$ 
            & $0.928{\scriptstyle\pm0.002}$ 
            & ${0.933}{\scriptstyle\pm0.006}$ 
            & $0.875{\scriptstyle\pm0.010}$ \\ 
\bottomrule 
\end{tabular} 
\end{table}

Table~\ref{tab:optimizer_ablation} isolates the effect of the policy optimizer under the identical unified DPP reward. HeLyMARL dominates HeLyMARL-M in both throughput and JFI, and does so with markedly lower across-seed variance. The cause is \emph{credit assignment}. Recall from Section~\ref{sec:introduction} that the unified reward removes the role-specific credit separation of\cite{ko2026lymarl}: under a single shared reward, simultaneous MAPPO updates cannot attribute the common signal to the group responsible for it, so each group adapts to a moving target set by the other's concurrent update. Since the users and the BSs each share a policy, the sequential update applies at the level of these two groups: the correction factor~\eqref{eq:user_correction} carries the realized user-policy shift into the BS update, so each group is evaluated against the other's \emph{updated} behavior. Consistent with this account, the gap widens as agents are added, the JFI margin roughly doubling from $B=3$ to $B=5$. The unified reward and the sequential group update are therefore two halves of a single design decision: the reward retains the DPP objective in its derived form, and the sequential update supplies the credit separation that the reward no longer provides.

Table~\ref{tab:helymarl_lymarl_comparison} isolates the effect of the handover virtual queue $G_u(t)$. At $\kappa=0.03$ the budget exceeds the switching level that even the most mobile user adopts, approximately $0.025$ in Fig.~\ref{fig:handover_constraint}(b), so action masking never activates and the constraint is slack. The two methods are then indistinguishable in throughput and JFI, with the gaps falling within one standard deviation. This is the intended behavior: modeling a constraint incurs negligible cost once it stops binding. At $\kappa=0.015$ the budget binds and the queue becomes informative, and HeLyMARL improves both throughput and fairness over LyMARL. Action masking alone keeps LyMARL feasible, but it can only reject handovers once the budget is already exhausted. The virtual queue instead prices every handover throughout training, so HeLyMARL spends its allowance on the switches that buy throughput and fairness rather than on whichever ones arrive first. Taken together, the two ablations show that HeLyMARL adapts to the budget it is given rather than following a fixed switching policy.

\section{Conclusion}
\label{sec:conclusion}
We investigated joint user association, BS activation, and handover control under coupled user-side and BS-side finite-horizon constraints. We proposed HeLyMARL, a Lyapunov-embedded heterogeneous MARL framework that embeds virtual queue dynamics into both the state and the reward, enabling decentralized constraint-aware control without explicit Lagrangian penalties or heuristic reward shaping. Our analysis established three guarantees: per-trajectory feasibility under budget-aware action masking, an $\mathcal{O}(1/\sqrt{K})$ bound on the episode-averaged violation of the Lagrangian-based variants that is confined to the inter-episode timescale, and an intra-episode pacing property that formalizes how episodic training yields proactive constraint satisfaction beyond greedy Lyapunov-based control. Simulation results show that HeLyMARL sustains high fairness and uninterrupted service throughout the horizon while remaining competitive in throughput, outperforming conventional MARL, Lyapunov-based, and constrained MARL benchmarks without premature budget exhaustion. Extending the framework to physical layers with inter-user rate coupling, such as MU-MIMO, and to offline MARL from pre-collected network-control logs are natural directions for future work.

\bibliographystyle{IEEEtran}
\bibliography{lymarl_refs}

\end{document}